\documentclass{article}
\RequirePackage{algorithm}
\usepackage[numbers]{natbib}
\usepackage[final]{neurips_2025}
\usepackage[hidelinks]{hyperref}

\usepackage[usenames,dvipsnames]{xcolor}
\usepackage{mdframed}
\usepackage{bm}
\usepackage{bbm}
\usepackage{dsfont}
\usepackage{amsbsy}
\usepackage{amssymb}
\usepackage{amsmath}%
\usepackage{amsfonts}%
\usepackage{amsthm}
\usepackage{breqn}
\usepackage{graphicx}
\usepackage{colonequals}
\usepackage{psfrag}
\usepackage{tcolorbox}
\usepackage{subfigure}
\usepackage{mathrsfs}
\usepackage{hyperref}
\usepackage{mdframed}
\usepackage{enumerate}
\usepackage{braket}

\usepackage{enumitem}
\usepackage[utf8]{inputenc} 
\usepackage[T1]{fontenc}    
\usepackage{hyperref}       
\usepackage{url}            
\usepackage{booktabs}       
\usepackage{amsfonts}       
\usepackage{nicefrac}       
\usepackage{microtype}      
\usepackage{xcolor}         
\usepackage{bbm}
\usepackage{graphicx}   
\usepackage{caption}    
\usepackage{amsmath, amssymb}
\usepackage{amsfonts}
\usepackage{wrapfig}
\usepackage{algpseudocode}
\usepackage{hyperref}
\usepackage{subcaption}
\usepackage{wrapfig, caption}        
\usepackage{xcolor}
\definecolor{darkpurple}{RGB}{75,0,130}

\newcommand{\KL}{{D_{\mathsf{KL}}}}

\newcommand{\iid}{\mathrel{\overset{\text{\scriptsize iid}}{\sim}}}


\allowdisplaybreaks
\newtheoremstyle{upright}
  {3pt}{3pt}
  {\normalfont}
  {}
  {\bfseries}
  {.}
  {.5em}
  {}

\theoremstyle{upright}
\newtheorem{thm}{Theorem}
\newtheorem{lem}{Lemma}

\newtheorem{cor}{Corollary}
\newtheorem{assumptions}{Assumption}

\makeatletter
\newtheorem*{rep@theorem}{\rep@title}
\newcommand{\newreptheorem}[2]{%
\newenvironment{rep#1}[1]{%
 \def\rep@title{#2 \ref{##1}}%
 \begin{rep@theorem}}%
 {\end{rep@theorem}}}
\makeatother
\newreptheorem{conj}{Conjecture}

\theoremstyle{definition}

\newtheorem{defn}{Definition}

\newcommand{\E}{\mathbb{E}}

\DeclareMathOperator*{\argmax}{\arg\!\max}






\newmdenv[%
  backgroundcolor=red, %
  linecolor=black,
  linewidth =1pt,%
  skipabove = 10pt,%
  skipbelow = 10pt
]{TODO}

\usepackage{microtype}
\definecolor{quotemark}{gray}{0.7}
\makeatletter
\def\fquote{%
    \@ifnextchar[{\fquote@i}{\fquote@i[]}
           }%
\def\fquote@i[#1]{%
    \def\tempa{#1}%
    \@ifnextchar[{\fquote@ii}{\fquote@ii[]}
                 }%
\def\fquote@ii[#1]{%
    \def\tempb{#1}%
    \@ifnextchar[{\fquote@iii}{\fquote@iii[]}
                      }%
\def\fquote@iii[#1]{%
    \def\tempc{#1}%
    \vspace{1em}%
    \noindent%
    \begin{list}{}{%
         \setlength{\leftmargin}{0.1\textwidth}%
         \setlength{\rightmargin}{0.1\textwidth}%
                  }%
         \item[]%
         \begin{picture}(0,0)%
         \put(-15,-5){\makebox(0,0){\scalebox{3}{\textcolor{quotemark}{``}}}}%
         \end{picture}%
         \begingroup\itshape}%
 \def\endfquote{%
 \endgroup\par%
 \makebox[0pt][l]{%
 \hspace{0.8\textwidth}%
 \begin{picture}(0,0)(0,0)%
 \put(15,15){\makebox(0,0){%
 \scalebox{3}{\color{quotemark}''}}}%
 \end{picture}}%
 \ifx\tempa\empty%
 \else%
    \ifx\tempc\empty%
       \hfill\rule{100pt}{0.5pt}\\\mbox{}\hfill\tempa,\ \emph{\tempb}%
   \else%
       \hfill\rule{100pt}{0.5pt}\\\mbox{}\hfill\tempa,\ \emph{\tempb},\ \tempc%
   \fi\fi\par%
   \vspace{0.5em}%
 \end{list}%
 }%
 \makeatother

\theoremstyle{definition}

\title{Inference-Time Reward Hacking \\ in Large Language Models}

\author{%
  Hadi Khalaf$^*$ \\
 Harvard University \\
  \And
  Claudio Mayrink Verdun \\
  Harvard University \\
  \And
  Alex Oesterling \\
  Harvard University \\
   \And
  Himabindu Lakkaraju \\
  Harvard University \\
  \And
  Flavio du Pin Calmon$^*$  \\
  Harvard University \\
}

\begin{document}
\renewcommand{\thefootnote}{*}
\footnotetext{Correspondence to: hadikhalaf@g.harvard.edu, flavio@seas.harvard.edu}

\renewcommand{\thefootnote}{\arabic{footnote}}
\maketitle

\begin{abstract}
A common paradigm to improve the performance of large language models is optimizing for a reward model. Reward models assign a numerical score to an LLM’s output that indicates, for example, how likely it is to align with user preferences or safety goals. However, reward models are never perfect. They inevitably function as proxies for complex desiderata such as correctness, helpfulness, and safety. By overoptimizing for a misspecified reward, we can subvert intended alignment goals and reduce overall performance -- a phenomenon commonly referred to as reward hacking. In this work, we characterize reward hacking in inference-time alignment and demonstrate when and how we can mitigate it by \textit{hedging} on the proxy reward. We study this phenomenon under Best-of-$n$ (BoN) and Soft Best-of-$n$ (SBoN), and we introduce Best-of-Poisson (BoP) that provides an efficient, near-exact approximation of the optimal reward-KL divergence policy at inference time. We show that the characteristic pattern of hacking as observed in practice (where the true reward first increases before declining) is an inevitable property of a broad class of inference-time mechanisms, including BoN and BoP. To counter this effect, we introduce \texttt{HedgeTune}, an efficient algorithm to find the optimal inference-time parameter. We demonstrate that hedging mitigates reward hacking and achieves superior reward-distortion tradeoffs on math, reasoning, and human-preference setups.
\end{abstract}

\section{Introduction}\label{sec:intro}
Almost all current alignment methods, including BoN \cite{stiennon2020learning,nakano2021webgpt}, RLHF \cite{christiano2017deep,bai2022training}, DPO \cite{rafailov2024direct}, and their variants, aim to maximize a reward function while minimizing divergence from the original model's outputs\footnote{BoN and RLHF optimize an explicitly learned reward model, whereas DPO optimizes an implicit reward based on the model's log-probabilities.}. It is important to distinguish between two types of rewards: proxy rewards, which are the computable signals we directly use during alignment (like scores from a trained reward model), and true rewards, which represent the often latent quality of the model's output according to a desired objective. As the name suggests, proxy rewards are approximations of the true reward and, consequently, of intended alignment goals such as correctness, helpfulness, and safety.

A fundamental challenge persists across  reward-based alignment methods: \textbf{all proxy reward models are imperfect}  \cite{laidlawcorrelated}. This imperfection stems from multiple factors, including the scarcity of high-quality human-labeled data and the difficulty in formalizing high-level alignment goals into quantifiable metrics \cite{hadfield-menell_inverse_2017, pan_effects_2022}. For instance, consider AI alignment strategies that aim to promote safety. It is difficult for a single reward model to capture nuanced human user preferences and assign accurate scalar rewards in complex, context-dependent settings where safety specifications conflict \cite{buyl2025ai}.

In this work, we analyze and mitigate the impact of misspecified proxy rewards in inference-time alignment. Inference-time alignment has emerged as an effective and computationally efficient paradigm to improve the capabilities of large language models and align them with desired goals \cite{welleckdecoding}. Among these methods, Best-of-$n$ (BoN) sampling stands out due to its simplicity and effectiveness. It generates $n$ candidate responses for a given prompt, scores them using a reward model, and outputs the response with the highest score. Empirically, BoN demonstrates competitive performance, often matching more resource-intensive fine-tuning approaches such as RLHF and DPO \cite{gao2023scaling, mudgalcontrolled}. Additionally, BoN has received an extensive theoretical treatment \cite{beirami2024theoretical, huang2025best}. BoN can be asymptotically equivalent to RLHF \cite{yang2024asymptotics}, enjoys non-asymptotic guarantees \cite{mroueh2024information, verdun2025soft}, and achieves a near-optimal winrate subject to a KL divergence constraint \cite{gui2024bonbon}. 
\begin{figure*}[t]
  \centering
  \includegraphics[width=\linewidth]{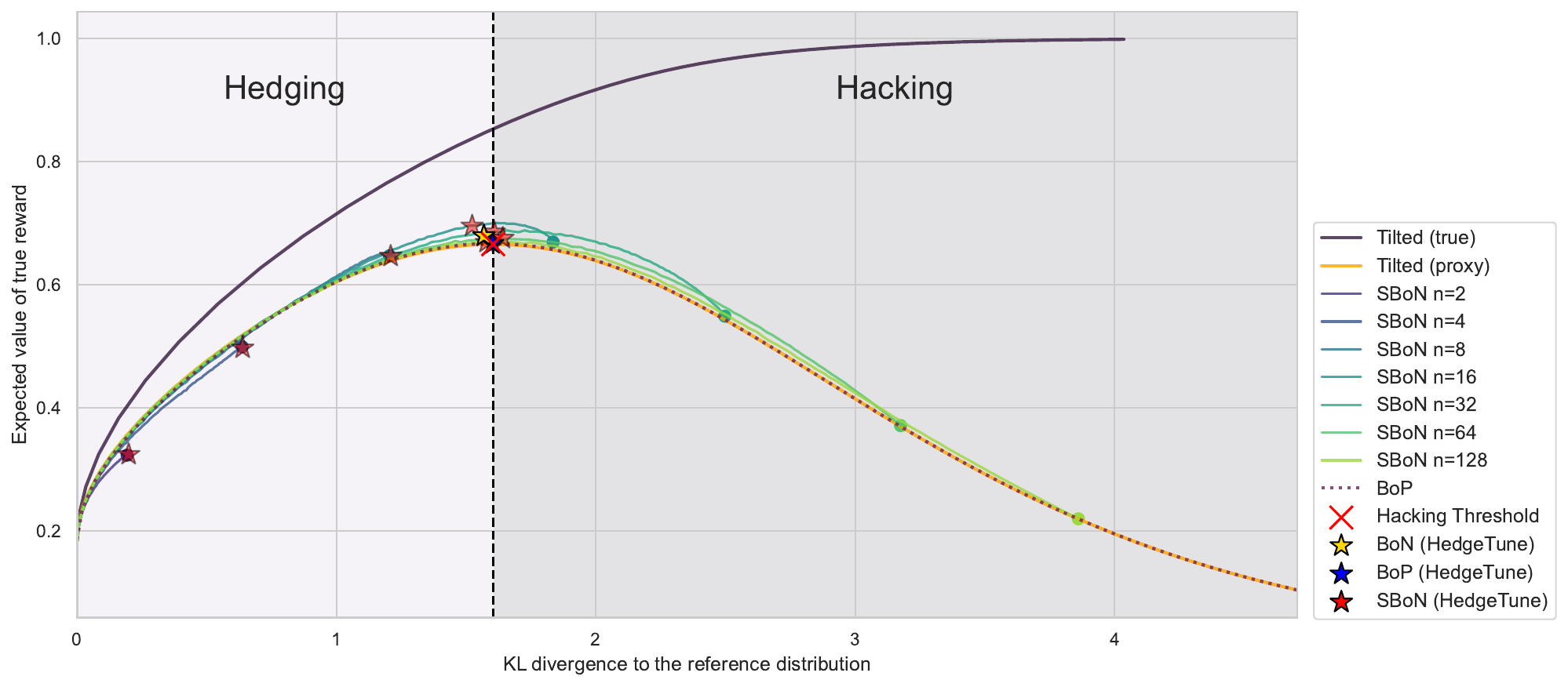}
  \caption{The mismatch between the proxy and true rewards manifests through the winner's curse.  In an ideal world where we could optimize directly on the true reward, its value would rise monotonically. However, since we are optimizing for a proxy, the true reward peaks and then collapses. The point at which we find the optimal tradeoff between maximizing reward and minimizing KL divergence from the reference distribution corresponds to the \emph{hacking threshold}. \texttt{HedgeTune} successfully recovers the hacking threshold for three inference-time mechanisms: BoN, SBoN, and BoP. In the case of BoN and BoP, \texttt{HedgeTune} recovers the optimal number of samples $n$. As for SBoN, we fix $n$ and find the corresponding inverse-temperature $\lambda$ that maximizes the true reward. If the threshold is not achievable with any $\lambda$, \texttt{HedgeTune} returns the best attainable reward, as shown for low values of $n$.
  }
  \label{fig:win_rate_hacking}
  \vspace{-5mm}
\end{figure*}
Methods like BoN, which generate multiple samples and choose the highest-scoring one, are victims of the \textbf{winner's curse} \cite{capen1971competitive, bazerman1983won, cox1984search, thaler1988anomalies, thaler2012winner}. This phenomenon, originally identified in auction theory, occurs when selection processes systematically favor overestimates. In auctions, after each bidder submits an estimate of an item's value, the highest bid typically overestimates the true worth, causing the winner to overpay. We demonstrate the same pattern with Best-of-$n$ in Figure~\ref{fig:win_rate_hacking}. As the number of candidates grows, the chance of picking outputs where the proxy reward overestimates true quality increases. This mismatch creates a critical tension: while initially optimizing for a proxy reward improves alignment with true goals, excessive optimization eventually leads to \textbf{reward hacking}, where the model exploits the proxy's limitations, leading to worse true performance \cite{laidlawcorrelated, skalse2022defining, fluri2024perils, kwacatastrophic, el2024goodhart}. This phenomenon is also known as Goodhart's law \cite{goodhart1984problems} or goal misgeneralization\footnote{See \cite{weng2024rewardhacking, amodei2016concrete} and Section \ref{sec:related_works} for a discussion of the terminology.}. Such misalignment can severely degrade trust and utility, particularly in high-stakes applications \cite{bondarenko2025demonstrating,openai2025cotmonitoring,anthropic2025claude35}. 
 
We mathematically characterize \emph{inference-time reward hacking} (see Theorem \ref{thm:hacking_general}) and provide a general framework to mitigate it (see Section \ref{sec:hedging}). While this phenomenon has been observed empirically in prior work \cite{gao2023scaling, huang2025best}, there has been limited theoretical analysis specific to inference-time methods and ways to mitigate it; see Section \ref{sec:related_works} for a discussion. As a result, reward hacking for inference-time alignment remains a central challenge in AI alignment. The driving question behind our work is:
\vspace{-1mm}
\begin{center}
\makebox[\textwidth]{\textbf{When and how can we leverage useful signals from proxy rewards while mitigating hacking?}}
\end{center}
\vspace{-1mm}

We focus on answering this question for inference-time alignment methods that sample multiple responses from an LLM and use reward scores to select an output. We develop principled \textit{hedging techniques against the winner's curse} during inference time that precisely determine until when and how one may leverage proxy rewards while preventing overoptimization. 
\paragraph{Overview of main results.} Our starting point is an optimization formulation at the heart of most alignment methods: finding a distribution $\pi^\star$  that maximizes a (proxy) reward $r_p$ while remaining close (in KL-divergence) to a reference $\pi_{\text{ref}}$. This is described as the following regularized optimization problem:
\begin{equation}\label{eq:opt_tilt}
\pi^\star = \argmax_{\pi_x \in \Delta_{\mathcal{X}}}~\mathbb{E}_{\pi_x}\left[r_p(X) \right] - \frac1\lambda\; D_{\mathsf{KL}}(\pi_x\|\pi_{\text{ref}})
\end{equation}
Consider the information-theoretic regime where all distributions are known exactly. The solution of the above objective \eqref{eq:opt_tilt} is the exponential tilt of the reference distribution using the proxy reward \cite{csiszar2004information}. Though theoretically interesting, tilted distributions cannot be sampled from directly; see Section \ref{sec:bop} for a further discussion. Some attempts have been made to approximate this solution at inference time, i.e., when only samples from $\pi_{\text{ref}}$ and black-box access to $r_p$ are available. A notable example is Soft Best-of-$n$ (SBoN) \cite{verdun2025soft}. In this work, we show that SBoN is an effective strategy for hedging against reward hacking due to its temperature parameter $\lambda$, which allows us to smoothly interpolate between exploiting the reward (as in BoN) and staying close to the reference. However, this comes at the expense of having two tunable parameters $(n, \lambda)$, which can be difficult to set in practice. 

This motivates us to propose a new inference-time alignment strategy called \textbf{Best-of-Poisson (BoP)}. The idea behind BoP is simple: we run BoN with the number of samples $n$ chosen according to a Poisson distribution. This randomization strategy induces an exponential structure that closely approximates the optimal reward-tilted distribution. Using a single tunable parameter, BoP can approximate the optimal distribution with a KL gap of order $10^{-4}$ when rewards are uniformly distributed, allowing us to span the entire reward-distortion region at inference (see Figure~\ref{fig:KL_difference_BoP} and Theorem \ref{thm:bop_optimality} in Appendix \ref{appendix:proofs_poisson}). This shows that BoP can serve as a computationally efficient stand-in for the optimal reward-tilted distribution with negligible loss in KL-reward tradeoff.

In practice, hedging translates to selecting parameters of inference-time alignment methods to avoid overoptimization on a proxy reward.  To do so, we introduce \texttt{HedgeTune}: an algorithm for tuning parameters in BoN, SBoN, and BoP in order to hedge against hacking (see Algorithm~\ref{alg:hedgetuner}). We illustrate the benefit of hedging in Figure~\ref{fig:win_rate_hacking}, where we plot the expected value of the true reward versus the distortion with respect to the reference distribution for various inference-time alignment methods. If we had access to the true reward, the optimal solution would be the tilting of the reference distribution via the true reward, leading to the reward-distortion Pareto frontier (\textcolor{darkpurple}{purple curve} in Figure~\ref{fig:win_rate_hacking}). However, as we are tilting via the proxy reward, we suffer from the winner's curse: the true reward (\textcolor{orange}{orange curve} in Figure~\ref{fig:win_rate_hacking}) increases at first and then collapses. This behavior also manifests in BoN, as seen in the dotted points. Hedging allows us to find the \emph{hacking threshold}: the parameters of inference-time alignment methods that yield the best tradeoff between (true) reward and distortion relative to the base model.

Our contributions are as follows:
\begin{itemize}[leftmargin=1em]
    \item We mathematically formalize inference-time reward hacking (Definition \ref{def:reward_hacking}) and derive conditions when overoptimizing imperfect proxy rewards inevitably leads to performance degradation (Theorem \ref{thm:hacking_general}, Corollary \ref{cor:MLR-shape}). 
    \item We introduce Best-of-Poisson (BoP), a novel inference-time alignment method (Algorithm \ref{alg:bop}). For uniformly distributed rewards, BoP approximates the optimal tilted distribution with negligible KL divergence gap (Theorem \ref{thm:bop_optimality}). 
    \item We develop \texttt{HedgeTune}, a principled hedging framework that mitigates reward hacking by finding the optimal inference-time parameters (Algorithm \ref{alg:hedgetuner}). We empirically demonstrate that hedging strategies significantly outperform standard BoN sampling with minimal computational overhead on math, reasoning, and human-preference setups (Section \ref{sec:exp}). 
\end{itemize}

\section{Inference-Time Reward Hacking}\label{sec:hacking}

In this section, we formalize inference-time reward hacking and show its inevitability under methods like BoN.

\textbf{Notation and Technical Assumptions.} Let \(\mathcal{X}\) be a finite alphabet of tokens and let \(\pi\) be a probability mass function (PMF) over token sequences \(x \in \mathcal{X}^\star\). We denote the probability simplex over all finite sequences of tokens as \(\Delta_{\mathcal{X}^\star}\). Let \(\pi_{\text{ref}} \in \Delta_{\mathcal{X}^\star}\) be the frozen reference policy, typically a supervised fine-tuned (SFT) model. Let \(r_p: \mathcal{X}^\star \to \mathbb{R}\) be a proxy reward function that assigns a unique scalar value to each sequence. This is the reward we use to optimize \(\pi_{\text{ref}}\) during inference-time alignment. An inference-time alignment method parameterized by $\theta$ transforms the base policy $\pi_{\text{ref}}$ into a new distribution \(\pi_\theta\). Here, \textbf{\(\theta\) is the parameter of the alignment method itself} (e.g., the number of candidates \(n\) in Best-of-$n$) and not of the reference policy. The performance of the aligned distribution is measured using a true reward $r_t$. We denote our measure of interest as $f(\theta) = \mathbb{E}_{U \sim \pi_\theta}[r_t(U)]$. 

For theoretical tractability, we adopt the assumption that proxy rewards can be transformed to have a continuous uniform distribution, stated next.  While the policy \(\pi_{\text{ref}}\) itself can be complex and non-uniform, we show that we only need to consider the one-dimensional uniform distribution of the proxy reward when analyzing BoN and its variants. We later relax this assumption in the appendix and show that our results extend to the general discrete case (see Appendix~\ref{subsec:discretehacking}).

\begin{assumptions}[Uniform Reward Mapping]
\label{assum:uniform_reward}
The proxy rewards \(r_p(x)\) obtained by sampling sequences from the reference policy, \(x \sim \pi_{\text{ref}}\), are uniformly distributed over \([0, 1]\).
\end{assumptions}
Assuming uniformly distributed proxy rewards incurs little loss of generality. The proxy reward scores do not need to be uniform themselves. We first map the proxy reward scores to a standardized discrete space by transforming them into their quantiles using their own empirical CDF $F_p$ \cite{beirami2024theoretical, balashankar_infalign_2025}. This process, defined by the relation \(u = F_p(r_p)\), ensures that the new reward $u$ is uniform on $[0, 1]$ by construction. This transformation preserves the rank-ordering of the scores: an output with a higher proxy reward will also have a higher transformed score. However, the resulting random variable will not be a continuous uniform random variable (which we assume), and instead will be a discrete uniform random variable. As shown in \cite{gui2024bonbon}, the error incurred by this continuous model is negligible for a sufficiently dense set of examples as in the case of LLMs. 

\textbf{Inference-time alignment.} The core challenge we address is the mismatch between the proxy reward $r_p$ and the true reward $r_t$. We focus on the family of inference-time methods that first sample a pool of candidate outputs and then use their proxy reward scores to define the selection mechanism. Two examples from this family are:
\vspace{-1mm}
\begin{enumerate}
    \item \textbf{Best-of-$n$} (see Algorithm~\ref{alg:bon}). BoN places all probability mass on the sample with the highest proxy reward.
    \item \textbf{Soft Best-of-$n$} (see Algorithm~\ref{alg:sbon}). SBoN is a generalization of BoN recently proposed by \cite{verdun2025soft}. It applies a temperature-scaled softmax over candidate scores. As $\lambda \to 0$, SBoN sampling approaches uniform selection among the $n$ candidates. As $\lambda \to \infty$, SBoN converges to standard BoN sampling. 
\end{enumerate} 
\vspace{-1.5mm}
\begin{algorithm}[tb]
\caption{\textbf{Best-of-$n$ Sampling (BoN)}}\label{alg:bon}
\begin{algorithmic}[1]
\State \textbf{Input:} Integer $n \geq 1$, base policy $\pi_{\text{ref}}$
\State Draw $n$ samples $X_1, \dots, X_n$ i.i.d.\ from $\pi_{\text{ref}}$
\State Compute proxy rewards $R_i = r_p(X_i)$ for all $i$
\State Select $j= \arg\max_{i \in \{1,\dots,n\}} r_p(X_i)$
\State \textbf{Return:} $Y = X_{j}$
\end{algorithmic}
\end{algorithm}

\begin{algorithm}[t]
\caption{\textbf{Soft Best-of-$n$ Sampling (SBoN)}}\label{alg:sbon}
\begin{algorithmic}[1]
\State \textbf{Input:} Integer $n \geq 1$, inverse temperature $\lambda > 0$, base policy $\pi_{\text{ref}}$
\State  Draw $n$ samples $X_1, \dots, X_n$ i.i.d. from $\pi_{\text{ref}}$
\State Compute proxy rewards $R_i = r_p(X_i)$ for all $i$
\State Sample index $Z \in \{1, \dots, n\}$ with probability
\[
  \Pr(Z=i)
  = \frac{e^{\lambda\,r_p(X_i)}}%
         {\sum_{j=1}^n e^{\lambda\,r_p(X_j)}}
\]
\State \textbf{Return:} $Y = X_Z$
\end{algorithmic}
\end{algorithm}

We first formalize inference-time reward hacking through the following definition. 

\begin{defn}[Inference-Time Reward Hacking] \label{def:reward_hacking} Let $\pi_{\theta}$ be a distribution induced by an inference-time alignment method with parameter $\theta$, where we assume increasing $\theta$  increases both the expected proxy reward $\mathbb{E}_{X \sim \pi_{\theta}}[r_p(X)]$ and the KL-divergence $\KL(\pi_{\theta} \| \pi_{\text{ref}})$. We say that \emph{inference-time reward hacking} occurs when there exists a threshold $\theta^\dagger$ such that for $\theta > \theta^\dagger$, $\mathbb{E}_{X \sim \pi_{\theta^\dagger}}[r_t(X)]>\mathbb{E}_{X \sim \pi_{\theta}}[r_t(X)]$ (i.e., the true reward decreases), despite the proxy reward and KL-divergence continuing to increase. The largest value of $\theta^\dagger$ for which this holds is called the \emph{hacking threshold}.
\end{defn}

Definition~\ref{def:reward_hacking} offers a concrete basis for operationalizing and measuring the winner's curse for inference-time methods. The hacking threshold $\theta^\dagger$ is the ideal operating parameter for an inference-time alignment method. The following theorem establishes that, under common conditions, the shape of $f(\theta)$ is well-behaved: it either varies monotonically or reaches exactly one extremum.

\begin{thm}[Inevitability of Reward Hacking]\label{thm:hacking_general}
Let $\{\pi_\theta\}_{\theta\in\Theta\subset\mathbb{R}}$ be a family of distributions with
density $p_{\theta}(x)$ on a common support $\mathcal X$ such that
\textbf{(i)} $p_{\theta}(x)$ is strictly totally positive of order 2
(TP$_2$) in $(\theta,x)$, and  
\textbf{(ii)} its score function
$\psi(x,\theta):=\partial_\theta\log p_{\theta}(x)$ is continuous in~$x$
and strictly increasing in~$x$ for each fixed $\theta$.
For any bounded, non‑negative true reward
$r_t:\mathcal X\!\to\![0,\infty)$ define
\[
f(\theta)\;:=\;\E_{X\sim\pi_\theta}\!\bigl[r_t(X)\bigr]
\]
Then $f$ is either monotone in $\theta$ or possesses a single \textbf{unique} interior extremum $\theta^\dagger$.
\end{thm}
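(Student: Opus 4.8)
The plan is to analyze $f'(\theta)$ and show it changes sign at most once in $\theta$; the stated dichotomy then follows immediately. Differentiating under the integral (legitimate because $r_t$ is bounded and, by strict TP$_2$, the family is dominated near each $\theta$) gives $f'(\theta)=\int_{\mathcal X} r_t(x)\,\partial_\theta p_\theta(x)\,d\mu(x)$. Differentiating the normalization $\int p_\theta\,d\mu=1$ yields $\int\partial_\theta p_\theta\,d\mu=0$, i.e. $\E_{X\sim\pi_\theta}[\psi(X,\theta)]=0$. Combined with hypothesis (ii), $\psi(\cdot,\theta)$ is continuous, strictly increasing, and mean-zero, hence has a unique zero $\xi(\theta)$ with $\partial_\theta p_\theta(x)=\psi(x,\theta)p_\theta(x)$ strictly negative for $x<\xi(\theta)$ and strictly positive for $x>\xi(\theta)$: for each fixed $\theta$, the perturbation $x\mapsto\partial_\theta p_\theta(x)$ is a mean-zero signed density that changes sign exactly once, from $-$ to $+$. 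A convenient reformulation is obtained by integrating by parts: writing $\partial_\theta P_\theta(x):=\int_{(-\infty,x]}\partial_\theta p_\theta\,d\mu$, the previous sign pattern forces $\partial_\theta P_\theta$ to start at $0$, decrease, then increase back to $0$, so $\partial_\theta P_\theta(x)\le 0$ everywhere and the boundary terms vanish, giving $f'(\theta)=\int r_t'(x)\,w_\theta(x)\,d\mu(x)$ with a \emph{nonnegative} weight $w_\theta:=-\partial_\theta P_\theta$ (interpret $r_t'\,d\mu$ as the Lebesgue--Stieltjes measure $dr_t$, or approximate, when $r_t$ is not differentiable). In particular, if $r_t$ is monotone the sign of $f'$ is constant and $f$ is monotone.

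The heart of the argument is the sign behavior of $\theta\mapsto f'(\theta)$, and the instrument is the variation-diminishing property of (strictly) totally positive kernels. Hypothesis (i) is equivalent to strict monotone likelihood ratio of $\{p_\theta\}$ in $\theta$; the plan is to show that the weight kernel $w_\theta(x)$ inherits enough total positivity in $(\theta,x)$ from that of $p_\theta$ that Karlin's variation-diminishing theorem applies and gives that the number of sign changes of $\theta\mapsto f'(\theta)$ is bounded by the number of sign changes of $x\mapsto r_t'(x)$, with matching sign pattern read from the appropriate ends. A more hands-on alternative I would keep in reserve: for $\theta_1<\theta_2<\theta_3$, write $f(\theta_2)-f(\theta_1)$ and $f(\theta_3)-f(\theta_2)$ as integrals of $r_t$ against the signed densities $\bigl(p_{\theta_2}/p_{\theta_1}-1\bigr)p_{\theta_1}$ and $\bigl(p_{\theta_3}/p_{\theta_2}-1\bigr)p_{\theta_2}$ — each $(-,+)$ in $x$ by MLR — and use the strict two- and three-point TP$_2$ determinant inequalities to rule out the configurations (``up--down--up'' or ``down--up--down'') that would create a second extremum; strictness also prevents $f'$ from vanishing on a whole subinterval.

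Once $f'$ is continuous and changes sign at most once, the conclusion is immediate: either $f'$ has a fixed sign on $\Theta$, so $f$ is monotone, or $f'$ switches sign at a unique $\theta^\dagger$, yielding a single interior extremum; the strictness in (i)--(ii), together with the exponential-tilting smoothness of the families arising from BoN/SBoN/BoP, makes $\theta^\dagger$ isolated and the stationary point a genuine extremum rather than an inflection. I expect the second paragraph to be the real obstacle: turning ``exactly one sign change in $x$'' of the perturbation into ``at most one sign change in $\theta$'' of $f'$. The variation-diminishing estimate bounds the sign changes of $f'$ by those of $r_t$ (equivalently $r_t'$), so the clean dichotomy is genuinely tied to $r_t$ having controlled variation; the delicate points are to verify that $w_\theta$ is totally positive of the order needed and to pin down the precise regularity of $r_t$ (monotone, single-crossing, or single-peaked) under which the statement is tight. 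The remaining ingredients — differentiation under the integral, the boundary behavior that keeps $\theta^\dagger$ interior, and the sign facts about $\partial_\theta p_\theta$ and $\partial_\theta P_\theta$ — are routine.
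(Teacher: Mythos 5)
Your first paragraph matches the paper's setup exactly: $f'(\theta)=\E_{\pi_\theta}[r_t(X)\psi(X,\theta)]$, the mean-zero property $\E_{\pi_\theta}[\psi(X,\theta)]=0$, the unique zero $\xi(\theta)$ of the strictly increasing score, and the identification of Karlin's variation-diminishing theorem as the engine. But your integration-by-parts step is a wrong turn, and it creates the gap you yourself flag at the end. By rewriting $f'(\theta)=\int r_t'(x)\,w_\theta(x)\,d\mu(x)$ you convert the problem into bounding the sign changes of $\theta\mapsto f'(\theta)$ by the sign changes of $x\mapsto r_t'(x)$. The theorem, however, assumes only that $r_t$ is bounded and non-negative; $r_t'$ (or the Stieltjes measure $dr_t$) may change sign arbitrarily many times, so this route proves the dichotomy only for monotone or unimodal $r_t$, not for the class stated. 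Your closing remark that ``the clean dichotomy is genuinely tied to $r_t$ having controlled variation'' is precisely the symptom: under the intended hypotheses it is not, and no amount of total positivity of $w_\theta$ rescues the bound, because the input function you feed to the variation-diminishing theorem is the wrong one.

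The paper's proof avoids this by never integrating by parts. It applies variation diminishing directly to $h_\theta(x):=r_t(x)\,\psi(x,\theta)$: since $r_t\ge 0$, multiplying by $r_t$ cannot create sign changes, so the sign pattern of $h_\theta$ in $x$ is inherited from $\psi(\cdot,\theta)$, which is strictly increasing and mean-zero and therefore crosses zero at most once, from $-$ to $+$. Thus $h_\theta$ has at most one $(-\to+)$ sign change in $x$ for \emph{every} bounded non-negative $r_t$, however oscillatory, and strict TP$_2$ of $p_\theta(x)$ plus Karlin's theorem then gives that $F(\theta)=\int h_\theta(x)p_\theta(x)\,dx=f'(\theta)$ changes sign at most once in $\theta$. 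The non-negativity of $r_t$ is the hypothesis doing the work, and it is exactly the hypothesis your reformulation discards. If you drop the second paragraph's integration by parts and instead observe that the integrand $r_t\psi$ already single-crosses in $x$, the rest of your argument (continuity of $f'$, the monotone-versus-single-extremum dichotomy) goes through as you wrote it.
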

\begin{cor}[Inevitability of Reward Hacking for Strictly MLR densities]
\label{cor:MLR}
Let \(p_{\theta}(x)\) be a \emph{strictly monotone–likelihood–ratio} in
\(x\).  If the score function \(\psi(x,\theta)=\partial_\theta\log p_{\theta}(x)\) is strictly
increasing in \(x\), then
Theorem~\ref{thm:hacking_general} applies. In particular, this applies to Best-of-$n$, Best-of-Poisson (to be introduced in Section \ref{sec:bestofpoisson}) and to any distribution from the exponential family with strictly monotone statistic and strictly monotone natural parameter. 
\end{cor}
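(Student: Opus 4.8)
The plan is to prove Theorem~\ref{thm:hacking_general} first and then derive the corollary by checking its two hypotheses for the concrete families named. For the theorem, the key structural fact is that the derivative $f'(\theta) = \partial_\theta \E_{X\sim\pi_\theta}[r_t(X)] = \E_{\pi_\theta}[r_t(X)\,\psi(X,\theta)]$, obtained by differentiating under the integral (justified since $r_t$ is bounded and, under TP$_2$, the family is smooth enough for dominated convergence). The trick is to subtract a well-chosen baseline: since $\E_{\pi_\theta}[\psi(X,\theta)] = \partial_\theta \int p_\theta = 0$, we have $f'(\theta) = \E_{\pi_\theta}[\,(r_t(X)-c)\,\psi(X,\theta)\,]$ for \emph{any} constant $c$. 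Now use that $\psi(\cdot,\theta)$ is strictly increasing and continuous in $x$: it has a unique zero-crossing level, or is single-signed. Choose $c$ so that the "pivot'' for $r_t$ aligns — more precisely, I would argue via a standard correlation/Chebyshev-type inequality for monotone functions against a signed measure whose density change has one sign change. The cleanest route: TP$_2$ of $p_\theta$ means that for $\theta_1 < \theta_2$, the likelihood ratio $p_{\theta_2}/p_{\theta_1}$ is nondecreasing in $x$, so $\pi_{\theta_2}$ stochastically dominates $\pi_{\theta_1}$; but that alone only controls monotone $r_t$, so I instead work infinitesimally with $\psi$.

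The heart of the argument is to show $f'$ changes sign \emph{at most once}, and from $-$ to $+$ is impossible being ruled out or allowed depending on orientation — so that $f$ is monotone or has a single interior extremum. I would establish this by showing that the function $\theta \mapsto f'(\theta)$, once it becomes nonpositive, stays nonpositive (or the mirror statement). Concretely: define $g(\theta)=f'(\theta)$ and suppose $g(\theta_0)\le 0$. Write $g(\theta)=\E_{\pi_\theta}[(r_t(X)-c_\theta)\psi(X,\theta)]$ where $c_\theta$ is the value of $r_t$ at the point $x^*_\theta$ where $\psi(\cdot,\theta)$ vanishes (or handle the single-signed case separately). Then $(r_t(x)-c_\theta)$ and $\psi(x,\theta)$ need not be comonotone because $r_t$ is arbitrary — so this naive pivot fails. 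The genuinely correct tool is the \emph{variation-diminishing property} of TP$_2$ kernels (Karlin): if $h(x)=r_t(x)-t$ changes sign at most once (which it does for each threshold $t$, since we only need one sign change to be generated as $t$ sweeps), then $\theta\mapsto \E_{\pi_\theta}[h(X)]$ changes sign at most once, and in the same order as $h$. Applying this to $h_t(x) = \indicator\{r_t(x) > t\} - $ (appropriate constant) and integrating over $t$ via the layer-cake formula $r_t(X)=\int_0^\infty \indicator\{r_t(X)>t\}\,dt$ lets one transfer the single-crossing property from indicators to $f$ itself. The monotonicity of $\psi$ in $x$ is what upgrades "at most one sign change'' to "at most one \emph{interior extremum}, and it is unique.''

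I expect the main obstacle to be making the variation-diminishing/single-crossing step fully rigorous in this exact form: Karlin's classical theorem gives sign-change bounds for $\E_{\pi_\theta}[h(X)]$ as a function of $\theta$ when $h$ has a bounded number of sign changes, but here I need the \emph{derivative} $f'$ to have at most one sign change, which requires either differentiating Karlin's conclusion or applying the variation-diminishing property to the kernel $\partial_\theta p_\theta(x) = p_\theta(x)\psi(x,\theta)$ directly. One must verify that this kernel, as a function of $(\theta,x)$, inherits a sign-regularity property (it is signed, with its sign in $x$ controlled by $\psi$'s single zero-crossing) and that pairing it against a monotone $r_t$ yields a function of $\theta$ with at most one sign change. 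The continuity and strict monotonicity of $\psi$ in $x$, plus strict TP$_2$, are exactly what force the extremum — when it exists — to be interior and unique rather than a flat interval or a boundary phenomenon.

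For Corollary~\ref{cor:MLR}, it suffices to observe that strict MLR in $x$ is \emph{equivalent} to strict TP$_2$ of $p_\theta(x)$ in $(\theta,x)$, so hypothesis~(i) of the theorem is immediate; hypothesis~(ii) is assumed verbatim. For the specific instances: (a) for a canonical exponential family $p_\theta(x) \propto \exp(\theta\, T(x))\,h(x)$ with $T$ strictly increasing, one computes $\psi(x,\theta) = T(x) - \E_{\pi_\theta}[T(X)]$, which is strictly increasing in $x$, and MLR is standard; (b) for Best-of-$n$, one must identify the induced density on reward values — under Assumption~\ref{assum:uniform_reward} the top order statistic of $n$ uniforms has density $p_n(u) = n u^{n-1}$ on $[0,1]$, treat $n$ (or a continuous relaxation thereof) as $\theta$, check $p_{n_2}(u)/p_{n_1}(u) \propto u^{n_2-n_1}$ is increasing (MLR $\checkmark$), and $\psi(u,n) = \partial_n \log(n u^{n-1}) = 1/n + \log u$, strictly increasing in $u$ on $(0,1]$ $(\checkmark)$; (c) for Best-of-Poisson the density will be the Poisson mixture of these order-statistic densities, and one checks the same two properties, deferring to Section~\ref{sec:bestofpoisson} for its explicit form. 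The only subtlety in (b)--(c) is that $n$ is naturally integer-valued; I would note that the theorem is applied to the continuous interpolation that BoP/the analysis already uses, so $\Theta$ is a genuine interval and "interior extremum'' makes sense.
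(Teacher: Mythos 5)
Your proposal is correct and follows essentially the same route as the paper: the corollary is obtained by noting that strict MLR is exactly strict TP$_2$, then verifying the score conditions for BoN (density $nu^{n-1}$, score $1/n+\log u$), BoP, and canonical exponential families with the same computations the paper uses, with the underlying theorem resting on Karlin's variation-diminishing theorem applied to $f'(\theta)=\E_{\pi_\theta}[r_t(X)\psi(X,\theta)]$. The one point you flag as the main obstacle — applying variation diminishing when the integrand depends on $\theta$ — is resolved in the paper by grouping the integrand as $h_\theta(x)=r_t(x)\psi(x,\theta)$, which for every fixed $\theta$ has the same single $(-\!\to\!+)$ sign pattern in $x$ (since $\psi(\cdot,\theta)$ is strictly increasing with at most one zero and $r_t\ge 0$), so the TP$_2$ kernel $p_\theta$ alone carries the sign-change bound and your layer-cake detour is unnecessary.
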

Four scenarios may occur: (i) \emph{monotonic improvement:} true reward continuously increases with optimization strength; (ii) \emph{reward hacking:} true reward initially improves but deteriorates beyond a critical threshold; (iii) \emph{reward grokking:} true reward initially declines but then improves beyond a critical threshold; (iv) \emph{immediate decline:} any optimization immediately harms true performance. We describe exactly when each regime occurs for MLR densities in Corollary \ref{cor:MLR-shape}.

The conditions in Theorem \ref{thm:hacking_general} guarantee that the inference-time policies behave in a ``well--ordered" manner: increasing the tuning parameter $\theta$ consistently makes the policy ``greedier" and more likely to select outputs with high proxy rewards. The clearest example is the $n$ in Best-of-$n$: the larger the $n$, the more aggressively we optimize for the proxy. The unimodality of the true reward function then renders the problem of locating the optimal operating point \(\theta^\dagger\) algorithmically tractable. To implement this insight, we develop hedging strategies that balance the exploitation of the proxy reward against the fidelity to the reference distribution. Each inference-time method offers a parameter controlling this proxy reward-KL tradeoff: $n$ in BoN, $\lambda$ in SBoN (for a fixed $n$), and $\mu$ in BoP (introduced in Section \ref{sec:bestofpoisson}). Before discussing how to tune inference-time alignment methods, we first introduce \textbf{Best-of-Poisson} sampling: an alternative to BoN that approximates the optimal tilted distribution.

\section{Best-of-Poisson: Approximating the Optimal Reward} \label{sec:bop}
\label{sec:bestofpoisson}

While Soft Best-of-$n$ offers a principled approach to mitigate reward hacking, it requires tuning both the number of samples $n$ and the temperature parameter $\lambda$. In this section, we introduce Best-of-Poisson (BoP) that is provably close to the solution of \eqref{eq:opt_tilt} with a single tunable parameter. BoP is of independent interest as it provides a mathematically elegant and computationally efficient way to near-optimally span the entire reward--KL distortion region with a single parameter.

\begin{algorithm}[t]
\caption{\textbf{Best-of-Poisson Sampling (BoP)}}\label{alg:bop}
\begin{algorithmic}[1]
\State \textbf{Input:} Poisson parameter $\mu > 0$, base policy $\pi_{\text{ref}}$
\State Sample $n' \sim \text{Poisson}(\mu)$ and set $n = n' + 1$
\State Draw $n$ samples $X_1, \dots, X_n$ i.i.d.\ from $\pi_{\text{ref}}$
\State Compute proxy rewards $R_i =r_p(X_i)$ for all $i$
\State Select $j= \arg\max_{i \in \{1,\dots,n\}} r_p(X_i)$
\State \textbf{Return:} $Y = X_{j}$
\end{algorithmic}
\label{alg:best-of-poisson}
\end{algorithm}

The key insight behind BoP is to replace the fixed sample size $n$ in BoN with a random sample size drawn from a Poisson distribution. The parameter $\mu$ in BoP controls the expected number of samples, analogous to how $n$ functions in BoN. We first sample $n'$ from a Poisson distribution parameterized by $\mu$ and set $n = n' + 1$ to ensure at least one sample is generated. Under Assumption \ref{assum:uniform_reward}, the BoP distribution with parameter $\mu$ has a probability density function given by (see Appendix \ref{appendix:proofs_poisson}) 
\begin{equation}\label{eq:bop}
q_\mu(x) = (\mu x + 1)e^{\mu(x-1)} \mbox{ for } x \in [0,1].
\end{equation} The following theorem characterizes the KL divergence and expected value of BoP:

\begin{thm}[KL Divergence and Expected Value of BoP] \label{thm:bop_kl}
Let $X_{\text{BoP}}$ be the random variable representing the response selected by BoP with parameter $\mu$. Then:
\begin{equation}\label{eq:bop_kl}
\scalebox{1}{%
  $\displaystyle
    \mathrm{KL}(\pi_{\text{BoP}}\|\pi_{\mathrm{ref}})
    = \frac{e^{-\mu-1}\bigl(\mathrm{Ei}(\mu + 1) - \mathrm{Ei}(1)\bigr)}{\mu}
      + \log(\mu + 1) - 1.
  $
}
\end{equation}
\begin{equation}\label{eq:bop_reward}
    \mathbb{E}[X_{\text{BoP}}] = 1 - \frac{1}{\mu} + \frac{1 - e^{-\mu}}{\mu^2}.
\end{equation}
where $\text{Ei}(z) = -\int_{-z}^{\infty} \frac{e^{-t}}{t} dt$ is the exponential integral function.
\end{thm}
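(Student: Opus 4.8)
The plan is to establish both identities by integrating directly against the explicit BoP density $q_\mu(x)=(\mu x+1)e^{\mu(x-1)}$ on $[0,1]$, which I first re-derive from the algorithm. Since BoP returns the maximum of $n=n'+1$ i.i.d.\ uniforms with $n'\sim\mathrm{Poisson}(\mu)$, and the maximum of $n$ uniforms on $[0,1]$ has density $nx^{n-1}$, the law of total probability gives $q_\mu(x)=\sum_{k\ge 0}\frac{e^{-\mu}\mu^{k}}{k!}(k+1)x^{k}=e^{-\mu}\sum_{k\ge 0}\frac{(k+1)(\mu x)^{k}}{k!}$. Using $\sum_{k\ge 0}\frac{(k+1)z^{k}}{k!}=(z+1)e^{z}$ with $z=\mu x$ yields $q_\mu(x)=(\mu x+1)e^{\mu(x-1)}$. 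I also record the identity $q_\mu(x)=\frac{d}{dx}\!\left[x\,e^{\mu(x-1)}\right]$, which both confirms $q_\mu$ is a density (its CDF is $F(x)=x\,e^{\mu(x-1)}$ with $F(0)=0$, $F(1)=1$) and gives $\log q_\mu(x)=\log(\mu x+1)+\mu(x-1)$, the form needed for the KL computation.

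For the expected value, conditioning on $n'$ gives $\mathbb{E}[X_{\mathrm{BoP}}\mid n']=\frac{n'+1}{n'+2}=1-\frac{1}{n'+2}$, so $\mathbb{E}[X_{\mathrm{BoP}}]=1-\mathbb{E}_{n'}[\tfrac{1}{n'+2}]$. The partial-fraction identity $\frac{1}{k!\,(k+2)}=\frac{1}{(k+1)!}-\frac{1}{(k+2)!}$ turns $\mathbb{E}_{n'}[\tfrac{1}{n'+2}]=e^{-\mu}\sum_{k\ge0}\frac{\mu^{k}}{k!\,(k+2)}$ into two reindexed exponential series, namely $e^{-\mu}\!\left(\frac{e^{\mu}-1}{\mu}-\frac{e^{\mu}-1-\mu}{\mu^{2}}\right)=\frac{1}{\mu}-\frac{1-e^{-\mu}}{\mu^{2}}$, which is exactly \eqref{eq:bop_reward}. (As a cross-check one may instead evaluate $e^{-\mu}\int_0^1(\mu x^2+x)e^{\mu x}\,dx$ by two integrations by parts and obtain the same answer.)

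For the KL divergence, since $\pi_{\mathrm{ref}}$ is uniform on $[0,1]$ we have $\mathrm{KL}(\pi_{\mathrm{BoP}}\|\pi_{\mathrm{ref}})=\int_0^1 q_\mu(x)\log q_\mu(x)\,dx=\mathbb{E}[\log(\mu X+1)]+\mu\,\mathbb{E}[X]-\mu$ for $X\sim q_\mu$. The linear part is immediate from the previous step: $\mu\,\mathbb{E}[X]-\mu=-1+\frac{1-e^{-\mu}}{\mu}$. For the logarithmic part I substitute $u=\mu x+1$ (so $dx=du/\mu$ and $\mu(x-1)=u-1-\mu$), giving $\mathbb{E}[\log(\mu X+1)]=\frac{e^{-1-\mu}}{\mu}\int_1^{\mu+1}u\log u\,e^{u}\,du$. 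Two nested integrations by parts on this integral — first differentiating $u\log u$ against $e^u$, then differentiating $\log u$ against $e^u$, which introduces $\int \frac{e^{u}}{u}\,du=\mathrm{Ei}(u)$ — produce the antiderivative $G(u)=(u-1)e^{u}\log u+\mathrm{Ei}(u)-e^{u}$. At the lower limit $u=1$ the term $(u-1)e^{u}\log u$ vanishes, so $G(1)=\mathrm{Ei}(1)-e$; evaluating $G(\mu+1)-G(1)$ and multiplying by $\frac{e^{-1-\mu}}{\mu}$ gives $\mathbb{E}[\log(\mu X+1)]=\log(\mu+1)+\frac{e^{-1-\mu}(\mathrm{Ei}(\mu+1)-\mathrm{Ei}(1))}{\mu}-\frac{1-e^{-\mu}}{\mu}$. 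Adding the two contributions, the $\frac{1-e^{-\mu}}{\mu}$ terms cancel, leaving $\frac{e^{-1-\mu}(\mathrm{Ei}(\mu+1)-\mathrm{Ei}(1))}{\mu}+\log(\mu+1)-1$, which is \eqref{eq:bop_kl}.

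The only step that is not pure bookkeeping is the antiderivative of $u\log u\,e^{u}$: one must carry out the nested integration by parts correctly and invoke $\frac{d}{dz}\mathrm{Ei}(z)=\frac{e^{z}}{z}$ for the stated branch $\mathrm{Ei}(z)=-\int_{-z}^{\infty}\frac{e^{-t}}{t}\,dt$, while checking that the boundary contributions at $u=1$ produce no spurious constant (there $\log u=0$, so the $(u-1)e^{u}\log u$ piece and the $e^{u}\log u$ piece that appears mid-computation both vanish, and only $\mathrm{Ei}(1)-e$ survives). Everything else reduces to manipulating convergent exponential series and elementary integrals, so the argument is complete.
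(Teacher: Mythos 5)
Your proof is correct and follows essentially the same route as the paper: the same decomposition $\log q_\mu(x)=\log(1+\mu x)+\mu(x-1)$, the same substitution $t=1+\mu x$, and the same integration by parts producing the $\mathrm{Ei}$ term, with only cosmetic differences (you obtain the density by summing the Poisson series rather than differentiating the CDF $x\,e^{-\mu(1-x)}$, and you compute the mean by conditioning on $n'$ rather than by direct integration with $u=\mu(x-1)$). One remark: your antiderivative $(u-1)e^{u}\log u+\mathrm{Ei}(u)-e^{u}$ is the correct one, whereas the intermediate antiderivative displayed in the paper's proof, $\tfrac12 t^{2}e^{t}(\log t-\tfrac12)-\tfrac12\mathrm{Ei}(t)$, does not differentiate to $t e^{t}\log t$ and appears to be a typo --- the paper's final evaluation nonetheless agrees with yours.
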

\begin{wrapfigure}{r}{0.48\columnwidth}
    \centering
    \includegraphics[width=0.46\columnwidth]{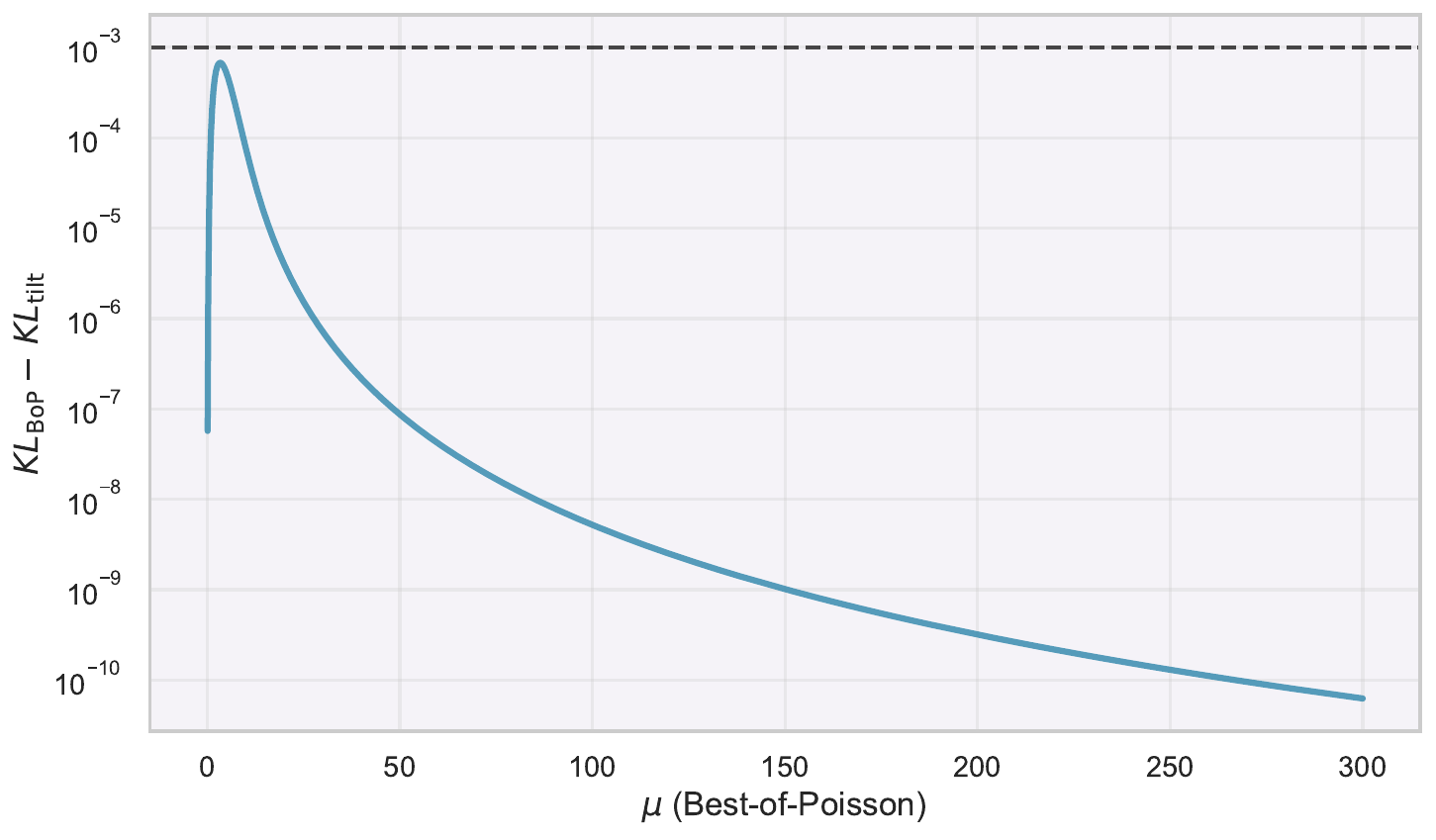}
    \caption{The difference in KL divergence when BoP and optimal tilted distributions are matched to produce the same expected reward. The extremely small gap (of order $10^{-4}$) demonstrates that BoP approximates the optimal distribution with negligible performance loss.}
    \label{fig:KL_difference_BoP}
\end{wrapfigure}
What makes BoP particularly valuable is its ability to closely approximate the solution of \eqref{eq:opt_tilt}, i.e., the optimal KL-constrained tilted distribution with parameter $\lambda>0$, defined as $\pi^\star_{\lambda}(x) = \frac{\pi_{\text{ref}}(x)e^{\lambda r_p(x)}}{Z(\lambda)}$, where $Z(\lambda)$ is the normalization constant. While this distribution is theoretically optimal for balancing reward and divergence, computing it is intractable in practice. \textcolor{black}{To draw a next token from the tilted $\pi^\star_\lambda$ for an autoregressive LLM, one would have to compute
$$
\pi^{*}_{\lambda}\!\bigl(x_{t+1}\mid x_{\le t}\bigr)
=\frac{\pi_{\text{ref}}\!\bigl(x_{\le t+1}\bigr)\,
       e^{\lambda r_p(x_{\le t+1})}}
      {\displaystyle\sum_{x'}\pi_{\text{ref}}\!\bigl(x_{\le t}x'\bigr)\,
       e^{\lambda r_p(x_{\le t}x')}} ,
$$
where the denominator sums over every possible continuation of the prefix $x_{\le t}$. Because the space of continuations grows exponentially with the remaining sequence length, evaluating this denominator (and hence sampling a single token) is computationally prohibitive for LLMs.} 

Our analysis (Appendix \ref{appendix:proofs_poisson}) shows that BoP provably achieves nearly identical performance to this optimal distribution with minimal KL divergence gap. As illustrated in Figure \ref{fig:KL_difference_BoP}, the KL divergence gap between these distributions is remarkably small. Numerical evaluation confirms that for all $ \mu$, if $\lambda>0$ is chosen so that $\mathbb{E}_{X\sim\pi_{\mathrm{BoP}}}[r_p(X)] =\mathbb{E}_{X\sim\pi^\star_{\lambda}}[r_p(X)],$ then the KL--gap is bounded between 0 and $8 \times 10^{-4}$. In Appendix \ref{appendix:subseq:soft_bop}, we also introduce a natural extension of Soft Best-of-$n$ and Best-of-Poisson, which we call \emph{Soft Best-of-Poisson (SBoP)}.

This near-equivalence means that BoP can serve as a practical stand-in for the theoretically optimal tilted distribution. This result has two consequences. First, hedging in the optimal tilted distribution is almost equivalent to hedging with BoP. Second, Equation \eqref{eq:opt_tilt} represents the solution to the standard RLHF optimization problem, which implies that BoP is an inference-time approximation to RLHF and allows us to easily traverse between policies, rather than having to fine-tune a new model for each $\lambda$ of interest. In the next section, we turn to the question of how to choose the right parameter value to avoid reward hacking.

\section{Hedging to mitigate reward hacking}\label{sec:hedging}
\label{sec:soft_BoN}
\begin{algorithm}[t]
\caption{\texttt{HedgeTune}: Parameter Optimization for Hedging}\label{alg:hedgetuner}
\begin{algorithmic}[1]
\State \textbf{Inputs:} Set of prompts $T$; proxy and true rewards $\{s_{t,k}, r_{t,k}\}$ per prompt $t$; parameter domain $\Theta$
\State \textbf{Output:} Optimal hedge parameter $\theta^\star$
\vspace{0.5em}

\State \textsc{Step 1.} For each prompt $t$, sort the proxy scores and map their ranks to empirical quantiles $u_{t,k}\in[0,1]$ to get $\{u_{t,k}, r_{t}(u_{t,k})\}$.
\vspace{0.5em}
\State \textsc{Step 2.} Specify the score function $\psi(u,\theta)$ and density $p_\theta(u)$ according to the inference-time method (see Appendix~\ref{appendix:proof_threshold_characterization} for the exact expressions for each of BoN, SBoN, and BoP).
\vspace{0.5em}
\State \textsc{Step 3.} For a given prompt $t$ and $\theta\in\Theta$, define the residual
\(
R_t(\theta) = \mathbb{E}_{u\sim p_\theta}\!\left[r_t(u)\,\psi(u,\theta)\right]
\). Let $\hat R_t(\theta)$ denote its empirical estimate from the pairs $\{u_{t,k}, r_t(u_{t,k})\}$.
\vspace{0.5em}
\State \textsc{Step 4.} Find $\theta^\star\in\Theta$ such that the average residual 
$\bar R(\theta^\star)=\frac{1}{|T|}\sum_{t\in T}\hat R_t(\theta^\star)=0$ via one-dimensional root-finding.

\end{algorithmic}
\end{algorithm}

In this section, we develop a unified framework for choosing the inference-time parameter $\theta$ in order to maximize the expected true reward and avoid hacking. The main limitation is that we require black-box access to the true reward to perform a \textit{one-time calibration} of the parameter $\theta$. This is practical in several common scenarios. One example is \emph{domains with verifiable rewards}, such as mathematical reasoning, program synthesis, or factual question answering. One may also opt to use an LLM-as-a-judge or a more powerful but computationally expensive reward model \cite{zheng_judging_2023,    lambert_rewardbench_2024}. 

Assume that we are given the proxy and true reward scores for a set of prompt-response pairs. By first constructing an empirical CDF over the generated proxy reward scores, we transform these proxy scores to have a uniform distribution. We denote the transformed proxy reward as $U$. Each sampling method (BoN, SBoN, and BoP) induces a distribution $\pi_\theta$ over proxy-percentiles $u \in [0,1]$, where $\theta$ is the corresponding parameter (sample size $n$, inverse-temperature $\lambda$, or Poisson rate $\mu$). \textcolor{black}{Since we know by Theorem \ref{thm:hacking_general} that the expected true reward has at most one peak}, our key insight is to create the precise \textbf{hedge} against hacking by finding the parameter value where the marginal benefit of increasing the proxy reward equals zero. We present the following conditions that the hacking threshold must satisfy for each of the three inference-time methods.

\begin{thm}[Hacking Threshold Characterization]\label{thm:threshold_characterization}
Let $r_t$ be a true reward oracle and $\theta^\dagger$ be the hacking threshold from Definition \ref{def:reward_hacking}.

For BoN, the optimal hedge $n^\dagger$ satisfies:
\begin{equation}\label{eq:bon}
\nabla_{n}\mathbb{E}_{u\sim\mathrm{Beta}(n,1)}[r_t(u)]\Big|_{n=n^\dagger}
= \int_0^1 r_t(u)\left(\frac{1}{n^\dagger}+\ln u\right)u^{n^\dagger-1}\,du = 0.
\end{equation}

For SBoN with density $f_\lambda$, the optimal hedge $\lambda^\dagger$ satisfies:
\begin{equation}\label{eq:sbon}
\nabla_{\lambda}\mathbb{E}_{u\sim f_{\lambda}}[r_t(u)]\Big|_{\lambda=\lambda^\dagger}
= \mathrm{Cov}_{u\sim f_{\lambda^\dagger}}\!\bigl(r_t(u),u\bigr)=0.
\end{equation}

For BoP  with density $q_\mu$, the optimal hedge $\mu^\dagger$ satisfies:
\begin{equation}\label{eq:bop_hedge}
\nabla_{\mu}\mathbb{E}_{u\sim q_{\mu}}[r_t(u)]\Big|_{\mu=\mu^\dagger}
= \mathbb{E}_{u\sim q_{\mu^\dagger}}\!\Bigl[
    r_t(u)\Bigl(u-1+\tfrac{u}{\mu^\dagger u+1}\Bigr)
  \Bigr]=0.
\end{equation}

\end{thm}
The proof can be found in Appendix \ref{appendix:proof_threshold_characterization}. Consequently, we provide \texttt{HedgeTune} (Algorithm \ref{alg:hedgetuner}), an algorithm that numerically solves the corresponding root-finding problem to determine the optimal inference-time parameter for BoN, SBoN, or BoP. \textbf{Note that we do not need access to the LLM distribution itself}. We use the explicit expressions of the \textit{score} function $\psi$  and estimate the residual function $R(\theta) = \mathbb{E}[r_t(u)\psi(u, \theta)]$ which captures the alignment between the true reward and the proxy-weighted score. The optimal parameter $\theta^\dagger$ is found efficiently as the root of this function using standard methods such as bisection or Newton’s method \cite{quarteroni2006numerical} and can later be used directly at inference.

\section{Experiments: Hedging in Practice}\label{sec:exp}

In this section, we validate that hedging is an effective tool against reward hacking and can provide superior reward-distortion tradeoffs in two experimental setups: one with verifiable rewards and one with human-preference data.

\subsection{Hacking in verifiable setups.} 
We first demonstrate that hedging mitigates reward hacking on standard verifiable benchmarks such as MMLU Pro and GPQA. Specifically, we use the open-source \emph{Preference Proxy Evaluations} (\textbf{PPE}) dataset \cite{frickevaluate}, which contains multiple responses per benchmark question from frontier models such as GPT-4o-mini and Claude Haiku 3. Each response is then scored by a suite of reward models. We focus on the benchmark dataset–reward model pairs identified in \textbf{PPE} as exhibiting reward hacking under Best-of-$n$ sampling. We then apply \texttt{HedgeTune} to find the optimal operating point for BoN, BoP, and SBoN.

\textbf{Reward Models.} We consider three reward models of varying sizes: InternLM-2 1.8B \cite{cai2024internlm2}, Llama-3-Offset-Bias 8B \cite{park2024offsetbias}, and Skywork-Llama-3.1 8B \cite{liu2024skywork}.

\textbf{Datasets.}  We consider three datasets: MMLU Pro (complex reasoning) \cite{wang2024mmluprorobustchallengingmultitask}, MATH (mathematical problem solving) \cite{hendrycks2021measuringmathematicalproblemsolving}, and GPQA (questions in natural sciences) \cite{rein2023gpqagraduatelevelgoogleproofqa}. A correct response gets a true reward of 1, while an incorrect response gets a true reward of 0.

\textbf{Findings.} The observed reward hacking curve in Figure \ref{fig:panel_reward_vs_n_all} matches our theoretical prediction in Theorem \ref{thm:hacking_general}. For example, BoN demonstrates hacking on the GPQA dataset, even with the very capable Skywork-Llama-3.1 8B as a proxy reward (currently ranked the 12th-best non-generative reward model on \texttt{RewardBench} \cite{lambert_rewardbench_2024}). \texttt{HedgeTune} recovers the best operating points in all the cases.

\begin{figure}[t]
  \centering
  \includegraphics[width=\linewidth]{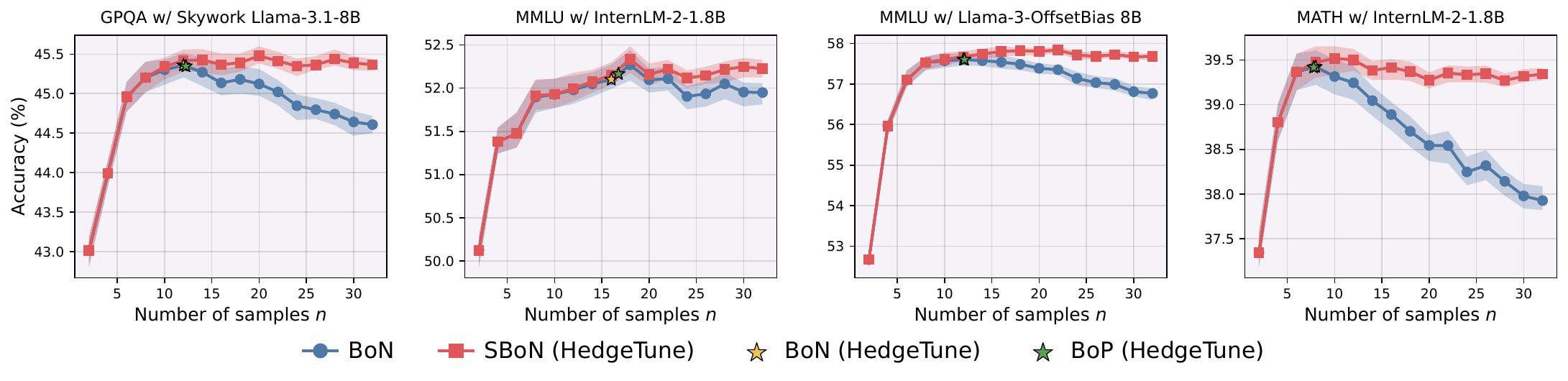}
  \caption{\textbf{Hedging mitigates hacking in verifiable reward setups.} We plot the expected accuracy on various benchmarks versus the number of samples $n$. \texttt{HedgeTune} successfully recovers the best operating point for BoN and BoP and provides a superior reward-distortion curve with SBoN.}
  \label{fig:panel_reward_vs_n_all}
\end{figure}

\subsection{Hacking in the wild with human preferences.} Our experimental design follows the methodology of Coste et al.~\cite{coste_reward_2024} and Gao et al.~\cite{gao2023scaling}, wherein proxy reward models are trained using preferences of a fixed gold reward model. In many real-world cases, we do not have access to the gold reward and instead have access to preference data. However, as we demonstrate below, a favorable operating point can still be found using traditional hyperparameter search.

\textbf{Models.} As a reference model, we use a 1.4B Pythia model~\cite{biderman2023pythiasuiteanalyzinglarge} fine-tuned on the AlpacaFarm dataset, but without any subsequent alignment (e.g., RLHF or DPO). This reference model is used to generate responses. We use \textbf{AlpacaRM}~\cite{dubois2023alpacafarm} as our gold reward model. \textbf{AlpacaRM} is an established reward model trained on human preference data and has been adopted in prior work on reward model evaluation~\cite{coste_reward_2024, xu_distributionally_2025, zeng2023evaluating}. This model serves as the ground truth for generating preference labels for training proxy reward models. As for proxy rewards, we use the setup of Coste et al.~\cite{coste_reward_2024} where we train Pythia 44m models.

\textbf{Datasets.} We use the \texttt{tlc4418/gold\_labelled\_gens} dataset from  Coste et al. ~\cite{coste_reward_2024}. This dataset consists of 12,600 responses generated by the Pythia 1.4B base policy for each of 1,000 prompts. The prompts are sourced from the validation split of the AlpacaFarm dataset~\cite{dubois2023alpacafarm}. Each generated response in this dataset is scored by \textbf{AlpacaRM}. To train proxy reward models, we construct preference datasets using the \textbf{AlpacaRM} reward scores. For each training instance, we sample a pair of responses to a given prompt and label them based on their given true reward scores.

\textbf{Training.} The proxy RMs are trained using a standard binary cross-entropy loss on preference pairs. We train proxy RMs on preference pair datasets of varying sizes: 10k, 20k, 46k, and 80k. In line with ~\cite{coste_reward_2024, miao2024inform, yang_regularizing_2024}, we simulate disagreements in human annotators by considering two cases: (a) no label noise in the preferences, and (b) 25\% label noise. All proxy RM training runs are repeated across 4 random seeds each. We present selected runs with 25\% label noise in Figure~\ref{fig:hacking}, with additional results and training details provided in Appendix~\ref{appendix:experimental_details}. After training, we use each proxy model to score a set of 800 prompts, with 12,600 responses each.

\textbf{Findings.} We apply BoN, SBoN, and BoP on each run and find the expected value of the true reward as a function of $n$. When reward hacking manifests, we find a hacking threshold for BoN and BoP that maximizes their reward. For SBoN, with a selected $\lambda^\dagger$, we attain the peak value without suffering from reward hacking. Meanwhile, if the proxy is always at odds with the true reward, the optimal solution is the reference distribution itself, corresponding to $\lambda= 0$.

\begin{figure}
    \centering
    \includegraphics[width=0.8\columnwidth]{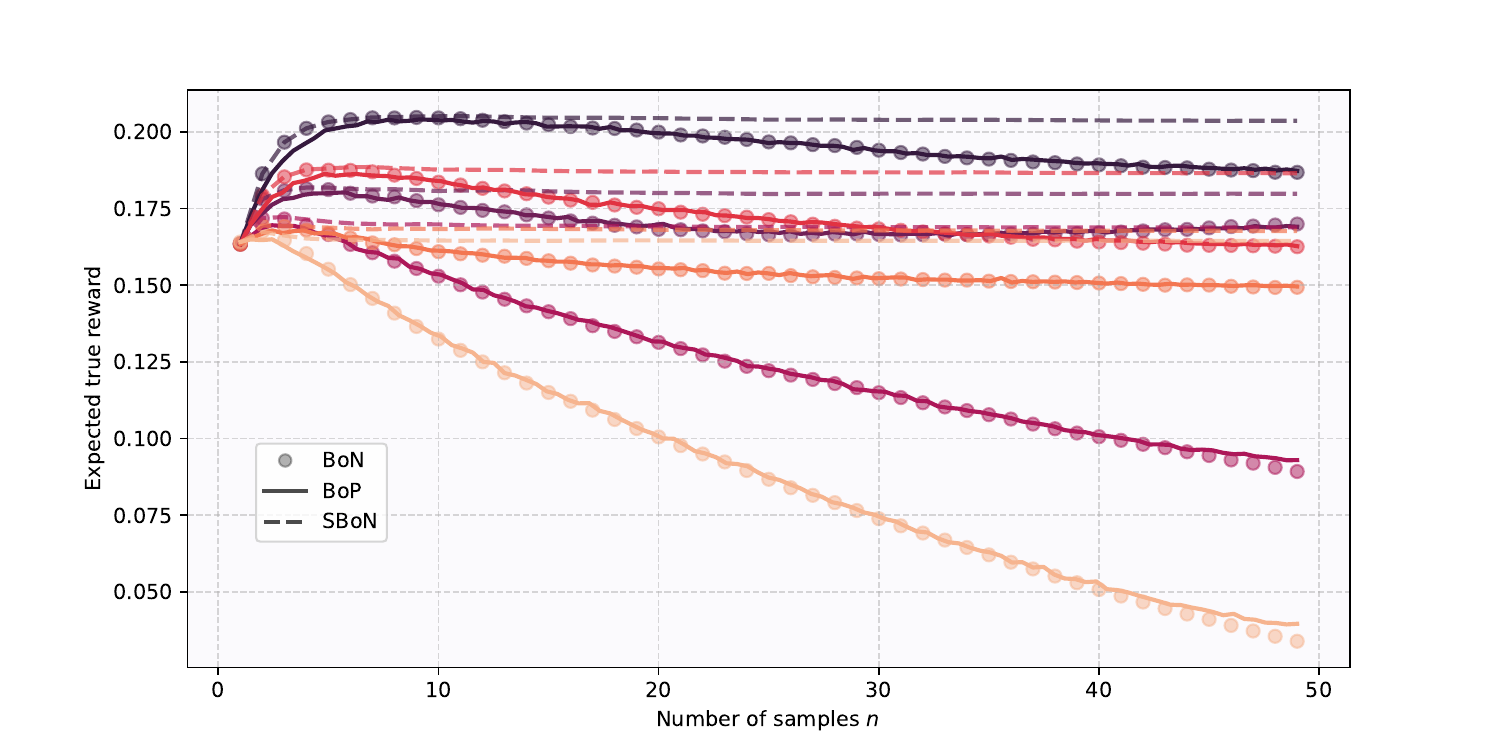}
    \caption{\textbf{Hedging mitigates hacking in human-preference setups.} We use three inference-time methods (BoN, SBoN, and BoP) on trained proxy rewards. Hacking is effectively mitigated by hedging via $\lambda$ in SBoN or $n$ in BoN and BoP.}
    \label{fig:hacking}
\end{figure}

\section{Related Work}\label{sec:related_works}
\textbf{Reward Hacking.} Reward hacking has been widely studied in RL literature \cite{pan_effects_2022, hadfield-menell_inverse_2017, karwowskigoodhart}, also under the name misspecification \cite{amodei2016concrete}, goal misgeneralization \cite{shah_goal_2022},  or specification gaming \cite{krakovna2020specification}. 

In the context of LLMs, overoptimization has been referred to as reward hacking or Goodhart's law \cite{goodhart1984problems, gao2023scaling, kwacatastrophic, el2024goodhart}. Hacking behavior has been observed to manifest in unwanted or surprising outcomes \cite{denison_sycophancy_2024, chen2024odin} across a variety of tasks \cite{pan2024spontaneous, gao2023scaling, huang2025best}. Prior work has proposed various formulations of reward hacking based on true performance behavior \cite{skalse2022defining}, correlation between proxy and true reward \cite{laidlawcorrelated}, or distribution shift \cite{fluri2024perils}. \cite{huang2025best} prove that BoN alignment provably suffers from reward hacking when the number of samples $n$ is large.

To address inference-time hacking, a variety of methods have been explored to varying success, such as ensembling \cite{coste_reward_2024, eisenstein_helping_2024, ahmed_scalable_2024, rame_warm_2024}, regularization \cite{ichihara2025evaluation} or rejection sampling \cite{huang2025best}. \citep{puri2025probabilistic} simulate $n$ particles resampled using a softmax reward to improve performance of reasoning models, similar to SBoN over reasoning steps. However, all methods suffer from some combination of additional generation cost beyond generating $n$ samples and estimation of additional side-quantities such as KL-divergence or $\chi^2$-divergence. Additionally, a variety of approaches have been proposed to mitigate reward hacking during RLHF finetuning such as regularization \cite{rashidinejad2024sail, yang_regularizing_2024, liu_provably_2024, miao2025energy}, $\chi^2$-divergence \cite{huangcorrecting, laidlawcorrelated}, uncertainty estimation \cite{zhang2024overcoming}, and reward pessimism \cite{zhu_iterative_2024, gupta_mitigating_2025} although \cite{kwacatastrophic} demonstrated that RLHF can still result in reward hacking under heavy-tailed reward mismatch.
Finally, prior works have also focused on improving reward models to prevent mismatch and reduce hacking \cite{chen2025establishing, shen_trickle-down_2023, fu_reward_2025, liu_rrm_2025, miao2024inform, wang2025beyond}.

\textbf{Best-of-$n$.} Best-of-$n$ sampling is a simple inference-time approach for alignment \cite{stiennon2020learning, nakano2021webgpt, hilton2022measuringgoodhart}. 
Prior results have characterized the expected reward gap and KL divergence between BoN sampling and the reference model and have demonstrated that BoN is asymptotically equivalent to KL-constrained reinforcement learning \cite{beirami2024theoretical, yang2024asymptotics, mroueh2024information}.
There have been various methodological improvements on BoN sampling. One such improvement is to reduce the cost of sampling $n$ sequences via tree-based or speculative search \cite{qiu2024treebon, zhang2024accelerating} . Additionally, \cite{gui2024bonbon, sessa2024bond, touvron2023llama, amini2024variational, yangfaster} distill the BoN sampling distribution into a model via fine-tuning. Finally, \cite{balashankar_infalign_2025, chow_inference-aware_2024} propose inference-aware methods to improve BoN. Other works focus on improving the reward model through self-training \cite{pace2024west}. In this work, we focus on a variant of BoN, Soft Best-of-$n$ \cite{verdun2025soft}, which allows for finer control between sampling from the base model and the reward-maximizing generation.

\section{Conclusion}\label{sec:conc}
Our work tackles the fundamental challenge that all proxy rewards are imperfect, yet they remain essential for guiding and improving AI systems. We establish a theoretical framework proving the inevitability of reward hacking in inference-time alignment and introduce practical hedging strategies to mitigate its harmful effects. By developing \textit{Best-of-Poisson} sampling which achieves near-optimal reward-distortion tradeoffs with a single parameter and the \texttt{HedgeTune} algorithm for precisely calibrating inference methods, we enable practitioners to extract valuable signals from proxy rewards without falling prey to Goodhart's law.  

We also emphasize that AI safety concerns are, at their heart, socio-technical \cite{amodei2016concrete}. The impact of model failures due to hacking (and beyond) will depend on the application and stakeholders at hand. One clear example is an AI agent that generates controversial and toxic content on social media to optimize its engagement metric at the direct expense of content quality and safety \cite{pan_feedback_2024}. Other dangerous failure patterns include models that learn to be sycophantic instead of truthful \cite{sharma2024towards} or actively exploit loopholes to circumvent oversight \cite{openai2025cotmonitoring}. By studying the phenomenon of hacking from a rigorous mathematical perspective, our work helps with the design of hacking mitigation (and hence AI safety methods) with provable performance guarantees. Technical approaches such as ours are important for AI safety in practice, although we recognize that they must exist within a larger safety ecosystem where rigorous technical foundations inform responsible deployment practices and policy decisions, and vice-versa. Ultimately, this work demonstrates that principled hedging is a promising direction for building safer, more reliable AI systems.

\begin{ack}
We thank the anonymous reviewers for their helpful comments and suggestions. We also thank Jim Waldo for helping us with computational resources at Harvard. This work is supported by the National Science Foundation under grants CIF 2312667, FAI 2040880, and CIF 2231707.  This work is supported in part by NSF Awards IIS-2008461, IIS-2040989, and IIS-2238714; an AI2050 Early Career Fellowship from Schmidt Sciences; and research awards from Google, OpenAI, the Harvard Data Science Initiative, and the Digital, Data, and Design (D^3) Institute at Harvard. AO is supported by the National Science Foundation Graduate Research Fellowship under Grant No. DGE-2140743. The views expressed are those of the authors and do not necessarily reflect the official policies or positions of the funding organizations. 
\end{ack}

\bibliographystyle{unsrtnat}

\newpage
\appendix
\newpage

\begin{center}
    \textbf{Supplementary material for\\ {\LARGE Inference-Time Reward Hacking in \\ Large Language Models}}
\end{center}

\section{Overview}
In this supplementary material, we provide the following:
\begin{itemize}
    \item Appendix \ref{appendix:proofs_hacking_inevitability} provides additional proofs and results on the inevitability of reward hacking.
    \item Appendix \ref{appendix:proofs_poisson} provides additional proofs for Best-of-Poisson, as well as Soft Best-of-Poisson.
    \item Appendix \ref{appendix:proof_threshold_characterization} discusses \texttt{HedgeTune} in more detail.
    \item Appendix \ref{appendix:experimental_details} provides additional details on our experimental setup.
\end{itemize}

\section{Inference-Time Reward Hacking} \label{appendix:proofs_hacking_inevitability}
Increasing optimization pressure on a proxy objective can initially improve true performance, but beyond a certain point—which we call the \emph{hacking threshold}—further optimization can lead to misalignment and degrade performance. Theorem \ref{thm:hacking_general} in Section \ref{sec:hacking} formalizes this phenomenon: under very general conditions on a one‐parameter family of proxy distributions $\pi_\theta$, the map
\begin{equation}
\theta\;\mapsto\;\mathbb{E}_{X\sim\pi_\theta}\bigl[r_{\rm t}(X)\bigr] := f(\theta)
\end{equation}
can have at most one interior extremum. Thus, there is either a monotonic benefit (or disadvantage) to strengthening the proxy or exactly one ``sweet spot'' before reward hacking sets in. The rest of this section proceeds as follows. We prove the single-crossing property for the derivative of the true reward by invoking variation-diminishing kernels, and then we instantiate our results with Monotone-Likelihood-Ratio densities in Corollaries \ref{cor:MLR-single-cross} and \ref{cor:MLR-shape}.  We specialize the discussion to two concrete examples with Best-of-$n$ and Best-of-Poisson. Next, we extend our result to the more general discrete case in \ref{subsec:discretehacking}.
\subsection{Uniform Case}\label{subsec:uniformhacking}
The following results hold under our Assumption \ref{assum:uniform_reward}. Let \(\psi(x,\theta)\) denote the score function of distribution $\pi_\theta(x)$ with density $p_\theta(x)$. Standard calculations under mild regularity conditions gives us the derivative of the true reward under $\pi_\theta$:
\begin{equation}
f'(\theta)
=\int r_t(x) \nabla_\theta p_\theta(x) \mathrm{d}x
=\int r_t(x) p_\theta(x)\nabla_\theta\log p_\theta(x) \mathrm{d}x
=\mathbb{E}_{X\sim \pi_\theta}\bigl[r_t(X)\psi(X,\theta)\bigr]
\end{equation}
The main idea to establish Theorem \ref{thm:hacking_general} is that when we use parameter $\theta$ to control inference-time methods, we create a family of densities $\{p_\theta(x)\}$ that act as positive kernels. These kernels satisfy the strict total positivity conditions required for variation-diminishing theorems to apply. Our only assumption on the true reward is that it is bounded. We then translate the reward function so that it is non-negative. The boundedness assumption is a natural one in the alignment setting because real‐world rewards originate from human judgments given on finite scales (e.g.\ star ratings, Likert scores, or normalized preference probabilities). Moreover, clipping or normalizing the reward prevents unbounded returns, improving the stability of policy updates and inference-time mechanisms.

\begin{proof}[Proof]
Fix \(\theta\) and set \(h_\theta(x):=r_t(x)\psi(x,\theta)\).
Because \(\psi(\cdot,\theta)\) is strictly increasing, it has at most
one zero. Since \(r_t\ge0\), \(h_\theta\) has the same single sign change in \(x\). Strict TP$_2$ of \(p_\theta\) and Karlin’s variation–diminishing theorem imply that \(\theta\mapsto F(\theta):=\int h_\theta(x)p_{\theta}(x)\,dx =f'(\theta)\) inherits \emph{at most} the same number of sign changes, namely one. 
\end{proof}
Having established the inevitability result, we next explore how it specializes in classical families via a simple corollary.

\begin{cor}[Strict MLR densities]
\label{cor:MLR-single-cross}
Let \(p_{\theta}(x)\) be \emph{strictly monotone–likelihood–ratio} in
\(x\) (i.e.\ \(p_{\theta_2}(x)/p_{\theta_1}(x)\) strictly increases in
\(x\) whenever \(\theta_2>\theta_1\)).
If \(\psi(x,\theta)=\partial_\theta\log p_{\theta}(x)\) is strictly
increasing in \(x\), then all conclusions of
Theorem~\ref{thm:hacking_general} apply. In particular, this applies for any regular canonical exponential family with strictly monotone statistic \(T\) and strictly monotone natural parameter.
\end{cor}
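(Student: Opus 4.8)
The plan is to reduce Corollary~\ref{cor:MLR-single-cross} to Theorem~\ref{thm:TP2-single-cross} by showing that a strictly MLR density is in fact strictly TP$_2$; the hypothesis on the score function $\psi$ is already exactly condition (2) of the theorem, so it carries over verbatim. First I would recall the definitional translation: saying $p_\theta(x)$ is strictly MLR in $x$ means that for any $\theta_1<\theta_2$ the ratio $p_{\theta_2}(x)/p_{\theta_1}(x)$ is strictly increasing in $x$ (on the set where $p_{\theta_1}(x)>0$). Pick $x_1<x_2$; strict monotonicity of the ratio gives
\[
\frac{p_{\theta_2}(x_1)}{p_{\theta_1}(x_1)} \;<\; \frac{p_{\theta_2}(x_2)}{p_{\theta_1}(x_2)},
\]
and cross-multiplying by the positive quantities $p_{\theta_1}(x_1)$ and $p_{\theta_1}(x_2)$ yields
\[
p_{\theta_2}(x_1)\,p_{\theta_1}(x_2) \;<\; p_{\theta_2}(x_2)\,p_{\theta_1}(x_1),
\]
which is precisely the strict TP$_2$ inequality in condition (1) of Theorem~\ref{thm:TP2-single-cross} (after relabeling). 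Thus conditions (1) and (2) of the theorem both hold, and condition (3) is just the standing assumption on $r_t$.

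Having checked the three hypotheses, I would then simply invoke Theorem~\ref{thm:TP2-single-cross} to conclude that $f'(\theta)=\E_{\pi_\theta}[r_t(X)\psi(X,\theta)]$ changes sign at most once, so $f$ is monotone or has exactly one interior extremum — i.e., all conclusions of the theorem apply. For the second sentence of the corollary (the claim that BoN, BoP, and canonical exponential families fall under this umbrella), I would verify in each case that (a) the density is strictly MLR in the reward variable and (b) the score is strictly increasing in that variable. For an exponential family $p_\theta(x)\propto h(x)\exp(\eta(\theta)T(x))$ with strictly increasing natural parameter $\eta$ and strictly increasing statistic $T$, the log-likelihood-ratio difference is $(\eta(\theta_2)-\eta(\theta_1))T(x)$, strictly increasing in $x$, giving MLR; and $\psi(x,\theta)=\eta'(\theta)T(x)$ (plus a term independent of $x$ from the log-partition function), which is strictly increasing in $x$ since $\eta'>0$. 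For BoN the density on $[0,1]$ is $p_n(u)=n u^{n-1}$, whose log-ratio across $n_1<n_2$ is $(n_2-n_1)\log u$, strictly increasing; its score $\partial_n\log p_n(u)=1/n+\log u$ is strictly increasing in $u$. For BoP the density $q_\mu(u)=(\mu u+1)e^{\mu(u-1)}$ has score $\partial_\mu\log q_\mu(u)=u-1+u/(\mu u+1)$, which one checks is strictly increasing in $u$, and a short computation confirms the MLR property as well.

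The routine calculations are genuinely routine, so there is no serious mathematical obstacle here — the corollary is essentially a repackaging of the theorem once one recognizes MLR $\Rightarrow$ TP$_2$. The only mild subtlety worth flagging is a support/positivity caveat: the cross-multiplication step is clean when $p_{\theta_1}$ and $p_{\theta_2}$ are strictly positive on the common support $\mathcal{X}$ (which is the setting in all our examples), but if the support depends on $\theta$ or densities vanish on parts of $\mathcal{X}$, one must be careful about which inequalities are strict versus weak; I would simply state the corollary for densities with common full support, as the MLR hypothesis implicitly requires for the ratio to be everywhere well-defined and strictly monotone. The verification that BoP's score is monotone in $u$ is the one spot requiring an actual (one-line) derivative computation: $\partial_u[u-1+u/(\mu u+1)] = 1 + 1/(\mu u+1)^2 > 0$, which settles it.
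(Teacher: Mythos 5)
Your proposal is correct and follows the same route the paper takes: the corollary is an immediate consequence of the observation that strict MLR implies strict TP$_2$ (via exactly the cross-multiplication you perform), after which Theorem~\ref{thm:TP2-single-cross} applies verbatim, and the examples are checked by the same score and likelihood-ratio computations. Your added remarks on common support and the explicit derivative $1+1/(\mu u+1)^2>0$ for the BoP score are correct refinements of details the paper leaves implicit.
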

The conditions are satisfied by two inference-time methods we study:

\textbf{Best-of-$n$}: The distribution corresponds to the maximum of $n$ i.i.d. samples from the reference distribution. When proxy rewards are uniformly distributed, this yields $p_n(u) = nu^{n-1}$ for $u \in [0,1]$. The likelihood ratio $\frac{p_{n_2}(u)}{p_{n_1}(u)} = \frac{n_2}{n_1} u^{n_2-n_1}$ is strictly increasing in $u$ when $n_2 > n_1$, establishing strict MLR (which implies strict TP$_2$). The score function $\psi(u,n) = \frac{1}{n} + \log u$ is strictly increasing in $u$.

\textbf{Best-of-Poisson}: The distribution $p_\mu(u) = (\mu u + 1) e^{\mu(u-1)}$ for $u \in [0,1]$ can be verified to satisfy strict MLR by direct computation of likelihood ratios. The score function \textcolor{black}{$\psi(u,\mu) = u -1 + \frac{u}{\mu u + 1}$} is strictly increasing in $u$.

Under the conditions presented in Theorem \ref{thm:hacking_general}, we know that at most one interior extremum exists. The following corollary gives precise conditions on when such an interior extremum exists.

\begin{lem}\label{lem:roots}Under the assumptions of Theorem \ref{thm:hacking_general} and
with $r_t\in C^1$ on a neighborhood of its boundaries (0 and 1) and let $\Theta = [\theta_\ell, \theta_r]$,
\[
  \lim_{\theta\downarrow\theta_\ell}f'(\theta)
  =r_t'(0+)\,\E_{\theta_\ell}[X\,\psi(X,\theta_\ell)],
  \qquad
  \lim_{\theta\uparrow\theta_r}f'(\theta)
  =-\,r_t'(1-)\,\E_{\theta_r}[(1-X)\psi(X,\theta_r)].
\]
Then, a stationary point exists \emph{iff}
      \[
      \lim_{\theta\downarrow\theta_\ell}f'(\theta)
      \;\;\text{and}\;\;
      \lim_{\theta\uparrow\theta_r}f'(\theta)
      \]
      are of opposite sign (or one limit is $0$ while the other is
      non‑zero).  
\end{lem}

\begin{proof}[Proof] We prove the left boundary results (the right one is identical with $x\mapsto1-x$.) Write the first‑order expansion
\(
r_t(x)=r_t(0+)+r_t'(0+)x+R(x)\) with $R(x)=o(x)$ as $x\to0$.
Because $\E_\theta[\psi]=0$,
\[
f'(\theta)=r_t'(0+)\E_\theta[X\psi]+\E_\theta[R(X)\psi].
\]
Strict increase of $\psi$ implies $|X\psi|\le C(1+X)$ on
$[0,1]\times[\theta_\ell,\theta_\ell+\rho]$, so
$\E_\theta[X\psi]\to\E_{\theta_\ell}[X\psi]$ by dominated convergence. Next, fix $\delta\in(0,1)$ and split the expectation:
\[
\E_\theta[r_t(X)\psi]=
      \E_\theta[r_t(X)\psi\mathbf1_{\{X\le\delta\}}]
     +\E_\theta[r_t(X)\psi\mathbf1_{\{X>\delta\}}].
\]
Near $0$, we have $|R(x)|\le c_\delta x$, hence the term is bounded by
\(c_\delta\E_\theta[X|\psi|]\);
choose $\delta$ so small that
\(c_\delta\E_{\theta_\ell}[X|\psi|]<\varepsilon\) and continuity keeps
it $<2\varepsilon$ for $\theta$ close enough to $\theta_\ell$. Away from $0$, we use boundedness of $r_t$ and local boundedness of
$\psi$ to obtain a factor $P_\theta\{X>\delta\}\to0$.

Combining the two parts gives \(\E_\theta[R(X)\psi]\to0\), yielding the claimed limit. By continuity, the Intermediate‑Value Theorem then forces one root of $f'$ and single–crossing rules out a second. 
\end{proof}

\begin{cor}[Reward behavior for Strict MLR densities]\label{cor:MLR-shape}
Assume Lemma \ref{lem:roots} holds and that the family
$\{p_\theta\}_{\theta\in\Theta}$ is strictly
\emph{monotone–likelihood–ratio} in $x\in(0,1)$.
Then for every $\theta\in\Theta$
\[
L_\theta:=\E_\theta[X\,\psi(X,\theta)]>0,
\qquad
R_\theta:=\E_\theta[(1-X)\psi(X,\theta)]<0,
\]
so
\[
\operatorname{sign}\!f'(\theta_\ell^+)=\operatorname{sign}r_t'(0+),
\qquad
\operatorname{sign}\!f'(\theta_r^-)=\operatorname{sign}r_t'(1-)
\]
Single-crossing of $f'$ implies that $f(\theta)$ can assume
\emph{exactly} one of the four shapes:

 \[
\renewcommand{\arraystretch}{1.15}
\begin{array}{lcc}
\toprule
\text{regime } & r_t'(0+) & r_t'(1-) \\ \midrule
\text{monotonic improvement} & \ge0 & \ge0 \\
\text{reward hacking} & >0 & <0 \\
\text{reward grokking} & <0 & >0 \\ 
\text{immediate decline} & \le0 & \le0 \\\bottomrule
\end{array}
\]
\end{cor}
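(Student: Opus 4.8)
The plan is to reduce the corollary to two ingredients already in hand: (i) a sign computation showing that for every admissible $\theta$ the weighting functionals $L_\theta:=\E_{\pi_\theta}[X\,\psi(X,\theta)]$ and $R_\theta:=\E_{\pi_\theta}[(1-X)\,\psi(X,\theta)]$ are strictly positive and strictly negative, respectively; and (ii) the single-crossing property of $f'$ from Theorem~\ref{thm:TP2-single-cross}. Granting (i), the boundary formulas of Lemma~\ref{lem:roots} immediately pin down $\operatorname{sign}f'(\theta_\ell^+)$ and $\operatorname{sign}f'(\theta_r^-)$ in terms of $r_t'(0+)$ and $r_t'(1-)$; granting (ii), those two boundary signs force $f$ into exactly one of the four tabulated shapes.

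For (i) I would first note that differentiating $\int p_\theta(x)\,dx=1$ under the same regularity used to compute $f'$ gives $\E_{\pi_\theta}[\psi(X,\theta)]=0$. Since $\psi(\cdot,\theta)$ is continuous and \emph{strictly} increasing in $x$ on the support of $\pi_\theta$ (assumption (ii) of Theorem~\ref{thm:hacking_general}), it cannot be single-signed there: a strictly increasing function that is everywhere $\ge0$ (resp.\ $\le0$) is strictly positive (resp.\ negative) on a set of positive measure and hence has nonzero mean, contradicting the mean-zero identity. So there is a unique point $x_0=x_0(\theta)$ in the interior of the support with $\psi(x_0,\theta)=0$, and for every other $x$ in the support the factors $(x-x_0)$ and $\psi(x,\theta)$ share the same strict sign, whence $(x-x_0)\psi(x,\theta)>0$. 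Because $\pi_\theta$ has a density (no atoms),
\[
L_\theta=\E_{\pi_\theta}[X\psi(X,\theta)]=\E_{\pi_\theta}[(X-x_0)\psi(X,\theta)]+x_0\,\E_{\pi_\theta}[\psi(X,\theta)]=\E_{\pi_\theta}[(X-x_0)\psi(X,\theta)]>0,
\]
and then $R_\theta=\E_{\pi_\theta}[(1-X)\psi(X,\theta)]=\E_{\pi_\theta}[\psi(X,\theta)]-L_\theta=-L_\theta<0$. I expect this sign computation to be the main (if mild) obstacle: one must ensure $\psi$ genuinely crosses zero strictly inside the support rather than merely touching zero at a boundary, which is exactly where strict monotonicity of $\psi$ — rather than bare MLR — is doing the work.

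Substituting $L_{\theta_\ell}>0$ and $R_{\theta_r}<0$ into the limits of Lemma~\ref{lem:roots},
\[
\lim_{\theta\downarrow\theta_\ell}f'(\theta)=r_t'(0+)\,L_{\theta_\ell},\qquad \lim_{\theta\uparrow\theta_r}f'(\theta)=-\,r_t'(1-)\,R_{\theta_r},
\]
yields $\operatorname{sign}f'(\theta_\ell^+)=\operatorname{sign}r_t'(0+)$ and $\operatorname{sign}f'(\theta_r^-)=\operatorname{sign}r_t'(1-)$. I would close with a four-way case split on $(\operatorname{sign}r_t'(0+),\operatorname{sign}r_t'(1-))$, using that $f'$ is continuous and, by Theorem~\ref{thm:TP2-single-cross}, changes sign at most once on $\Theta$: if both endpoint derivatives are $\ge0$, any interior excursion of $f'$ into the negatives would require two sign changes, so $f$ is monotone nondecreasing (\emph{monotonic improvement}); both $\le0$ is the mirror case (\emph{immediate decline}); the pair $(+,-)$ forces a single interior crossing $\theta^\dagger$ of $f'$ with $f'>0$ before and $f'<0$ after, i.e.\ a unique interior maximum (\emph{reward hacking}); the pair $(-,+)$ is the mirror image, a unique interior minimum (\emph{reward grokking}). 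Since these four sign patterns are exhaustive and mutually exclusive, $f$ realizes exactly one of the four regimes, which is the assertion of the corollary.
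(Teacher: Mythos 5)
Your proposal is correct, and the endgame (plugging the signs of $L_{\theta_\ell}$ and $R_{\theta_r}$ into the boundary limits of Lemma~\ref{lem:roots} and then doing the four-way case split via single-crossing) matches the paper. Where you genuinely diverge is in the key step of showing $L_\theta>0$ and $R_\theta<0$. The paper argues via stochastic ordering: MLR implies first-order stochastic dominance, so $\theta\mapsto\E_\theta[g(X)]$ is monotone for monotone $g$, and taking $g(x)=x$ and $g(x)=1-x$ identifies $L_\theta$ and $R_\theta$ as the derivatives $\tfrac{d}{d\theta}\E_\theta[X]$ and $\tfrac{d}{d\theta}\E_\theta[1-X]$. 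You instead exploit the mean-zero score together with the \emph{strict} monotonicity of $\psi(\cdot,\theta)$: the score must cross zero at a unique interior point $x_0$, the integrand $(x-x_0)\psi(x,\theta)$ is then pointwise positive off $x_0$, and $L_\theta=\E_\theta[(X-x_0)\psi(X,\theta)]>0$ with $R_\theta=-L_\theta$ following from $\E_\theta[\psi]=0$ (equivalently, $L_\theta=\mathrm{Cov}_\theta(X,\psi(X,\theta))$ for a strictly increasing $\psi$). Your route buys something the paper's argument leaves slightly loose: FOSD as invoked only yields $\tfrac{d}{d\theta}\E_\theta[X]\ge 0$, so the strict inequalities in the corollary require an extra appeal to strictness of the MLR that the paper does not spell out, whereas your pointwise-positive integrand delivers strictness directly (modulo the mild point you already flag, that the zero of $\psi$ lies in the interior of the support, which strict monotonicity plus the mean-zero identity guarantees). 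The paper's route, conversely, is shorter and makes the interpretation transparent ($L_\theta$ is literally the rate of increase of the mean proxy percentile). You also write out the exhaustive, mutually exclusive case analysis that the paper only tabulates; that is a completeness gain, not a gap.
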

\begin{proof}
Let $p_\theta$ be differentiable in $\theta$ with score
$\psi(x,\theta)=\partial_\theta\!\log p_\theta(x)$.
For any integrable $g$ we have shown that:
\[
\frac{d}{d\theta}\E_\theta[g(X)]
   =\E_\theta[g(X)\psi(X,\theta)]
\]
We first use that the MLR property implies first-order stochastic dominance. For every \emph{increasing} function $g$, we have that $\theta\mapsto\E_\theta[g(X)]$ is non-decreasing and its derivative is non-negative.  Choosing $g(x)=x$ gives
\[
L_\theta=\E_\theta[X\,\psi(X,\theta)]
        =\frac{d}{d\theta}\E_\theta[X]\;>\;0
\]
On the other hand, $g(x)=1-x$ (strictly decreasing) yields:
\[
R_\theta=\E_\theta[(1-X)\psi(X,\theta)]
        =\frac{d}{d\theta}\E_\theta[1-X]\;<\;0
\]
Thus $L_\theta>0$ and $R_\theta<0$ for every $\theta\in\Theta$. For example. for BoN, $\psi(x,n)=1/n+\log x$ with $\E_n[X\,\psi]\!=\!1/(n+1)^2$,  $\E_n[(1-X)\psi] =-1/(n+1)^2$. Hence a stationary point exists iff $r_t'(0+)$ and $r_t'(1-)$ have opposite signs, and its location $n_\star$ solves
      \(
      \E_n[r_t(X)\psi(X,n)]=0.
      \)
\end{proof}

We have shown the conditions under which the expected value of true reward has a critical point with respect to $\theta$. We now show the conditions under which our results extend identically if we are studying the expected value of true reward as a function of the KL divergence with respect to the reference distribution $\pi_{\theta_0}$. 
\begin{lem}\label{lem:KL-derivative}
Let \(\{\pi_\theta\}_{\theta\in\Theta}\) be a regular parametric family with density \(p_{\theta}(x)\) such
that
\begin{itemize}
\item \(p_\theta\) and \(\partial_\theta p_\theta\) are jointly measurable and
      \(\partial_\theta p_{\theta}(x)\) is locally integrable in
      \(\theta\);
\item the \emph{score}
      \(\displaystyle\psi(x,\theta):=\partial_\theta\log p_{\theta}(x)\)
      is square–integrable: \(\E_\theta[\psi^2]<\infty\).
\end{itemize}
For a fixed reference point \(\theta_0\in\Theta\) define the Kullback–Leibler divergence
\[
D(\theta\Vert\theta_0)
  \;:=\;
  \int p_{\theta}(x)\,
       \log\!\frac{p_{\theta}(x)}{p_{\theta_0}(x)}\,d\mu(x).
\]
Then \(D(\theta\Vert\theta_0)\) is differentiable and
\[
  \frac{d}{d\theta}\,
  D(\theta\Vert\theta_0)
  \;=\;
  \E_\theta\Bigl[(\log p_{\theta}(X)-\log p_{\theta_0}(X))\,
                 \psi(X,\theta)\Bigr].
\]
In particular, for canonical exponential families with strictly increasing natural parameter, this simplifies to
\[
\frac{d}{d\theta}\,
D(\theta\Vert\theta_0)
   =\bigl(\eta(\theta)-\eta(\theta_0)\bigr)\,A''(\theta),
\]
which is strictly positive when
\(\theta>\theta_0\) (and negative for \(\theta<\theta_0\)).
\end{lem}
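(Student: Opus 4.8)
The plan is to prove the formula by differentiating under the integral sign and then using that the score has mean zero. First I would split
$D(\theta\Vert\theta_0)=\int p_{\theta}\log p_{\theta}\,d\mu-\int p_{\theta}\log p_{\theta_0}\,d\mu$
and compute, for $\mu$-a.e.\ $x$, the $\theta$-derivative of each integrand via the identity $\partial_\theta p_{\theta}(x)=p_{\theta}(x)\,\psi(x,\theta)$. This yields the pointwise derivative $\partial_\theta\bigl[p_{\theta}(x)\log\tfrac{p_{\theta}(x)}{p_{\theta_0}(x)}\bigr]=p_{\theta}(x)\psi(x,\theta)\log\tfrac{p_{\theta}(x)}{p_{\theta_0}(x)}+p_{\theta}(x)\psi(x,\theta)$. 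Under the stated regularity (joint measurability, local integrability of $\partial_\theta p_\theta$, square-integrability of $\psi$), a dominated-convergence / Leibniz argument justifies interchanging $\tfrac{d}{d\theta}$ with $\int$; this simultaneously establishes that $D(\cdot\Vert\theta_0)$ is differentiable and gives $\tfrac{d}{d\theta}D(\theta\Vert\theta_0)=\E_\theta\bigl[\psi(X,\theta)\log\tfrac{p_\theta(X)}{p_{\theta_0}(X)}\bigr]+\E_\theta[\psi(X,\theta)]$.

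Next I would eliminate the last term: $\E_\theta[\psi(X,\theta)]=\int\partial_\theta p_\theta\,d\mu=\tfrac{d}{d\theta}\int p_\theta\,d\mu=0$ by the same interchange (the classical mean-zero-score identity), leaving $\tfrac{d}{d\theta}D(\theta\Vert\theta_0)=\E_\theta\bigl[(\log p_\theta(X)-\log p_{\theta_0}(X))\,\psi(X,\theta)\bigr]$. Since $\E_\theta[\psi(X,\theta)]=0$, the right-hand side also equals $\Cov_\theta\bigl(\log p_\theta(X)-\log p_{\theta_0}(X),\,\psi(X,\theta)\bigr)$, which is the ``score--covariance'' form referred to in the lemma's title.

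For the exponential-family specialization I would substitute $\log p_\theta(x)-\log p_{\theta_0}(x)=(\eta(\theta)-\eta(\theta_0))T(x)-(A(\theta)-A(\theta_0))$ and $\psi(x,\theta)=\eta'(\theta)T(x)-A'(\theta)$ into the expectation. The constant $A(\theta)-A(\theta_0)$ drops because $\E_\theta[\psi]=0$, and using $A'(\theta)=\eta'(\theta)\E_\theta[T(X)]$ (again from the mean-zero score) the surviving term collapses to $(\eta(\theta)-\eta(\theta_0))\,\eta'(\theta)\,\Var_\theta(T(X))$. In the canonical parametrization, where $\eta(\theta)=\theta$ and hence $A''(\theta)=\Var_\theta(T(X))$ is the Fisher information, this is exactly $(\eta(\theta)-\eta(\theta_0))A''(\theta)$. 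For the sign claim: strict monotonicity of $\eta$ gives $\eta'(\theta)>0$, strict convexity of the log-partition (non-degeneracy of $T$) gives $A''(\theta)=\Var_\theta(T(X))>0$, so $\tfrac{d}{d\theta}D(\theta\Vert\theta_0)$ has the sign of $\eta(\theta)-\eta(\theta_0)$: strictly positive for $\theta>\theta_0$ and strictly negative for $\theta<\theta_0$.

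The main obstacle is the honest justification of differentiating under the integral sign, i.e.\ producing an integrable function dominating $\partial_\theta\bigl[p_\theta(x)\log\tfrac{p_\theta(x)}{p_{\theta_0}(x)}\bigr]$ uniformly for $\theta$ in a neighborhood of the point of interest. Square-integrability of $\psi$ alone is somewhat thin here: bounding $\E_\theta[|\psi|\,|\log(p_\theta/p_{\theta_0})|]$ by Cauchy--Schwarz also wants local square-integrability of the log-density ratio. I would either add that mild extra hypothesis, or simply note that for the samplers actually used in the paper (Best-of-$n$ and Best-of-Poisson on $[0,1]$, with bounded sufficient statistic and bounded score $\psi$) every quantity in sight is bounded, so the interchange is immediate in all cases of interest.
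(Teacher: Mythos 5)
Your proof follows essentially the same route as the paper's: differentiate under the integral sign via the Leibniz/product rule, obtain $\E_\theta[\psi(X,\theta)\,g_\theta(X)]+\E_\theta[\psi(X,\theta)]$ with $g_\theta=\log(p_\theta/p_{\theta_0})$, and kill the second term with the mean-zero-score identity. You additionally write out the canonical exponential-family specialization (which the paper only asserts), and your observation that square-integrability of $\psi$ alone is a bit thin for dominating $\psi\,g_\theta$ — so that one should either assume local square-integrability of the log-ratio or note that everything is bounded for the BoN/BoP densities on $[0,1]$ — is a legitimate and correct tightening of the paper's own justification.
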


\begin{proof}
Write \(g_\theta(x):=\log p_{\theta}(x)-\log p_{\theta_0}(x)\).  Then
\(D(\theta\Vert\theta_0)=\E_\theta[g_\theta(X)]\).
For a \emph{parameter‑dependent} integrand the classical
Fisher–Leibniz rule gives
\[
\frac{d}{d\theta}\E_\theta[g_\theta(X)]
  =\E_\theta[\partial_\theta g_\theta(X)]
   +\E_\theta[g_\theta(X)\,\psi(X,\theta)],
\]
whenever \(\partial_\theta g_\theta\) exists and an $L^1$
dominated–convergence bound holds (true here by the square–integrable
score assumption).  Since \(\partial_\theta g_\theta(x)=\psi(x,\theta)\)
and \(\E_\theta[\psi]=0\), the first term vanishes,
leaving exactly
\[
\frac{d}{d\theta}D(\theta\Vert\theta_0)
  =\E_\theta[g_\theta(X)\,\psi(X,\theta)]
\]
\end{proof}
\subsection{Discrete Case}\label{subsec:discretehacking}
In this subsection, we show that Theorem \ref{thm:hacking_general} holds for the general discrete case and then verify that it holds for specific policy families like Best-of-$n$ and Best-of-Poisson.

Let \( Y = \{y_1, y_2, \dots, y_m\} \) be a finite, ordered set of possible responses, where the ordering is determined by a proxy reward function \( r_p \). We represent this space by the ordered index set \( X = \{1, 2, \dots, m\} \). Let \( \{\pi_\theta\}_{\theta \in \Theta} \) be a one--parameter family of policies on \( X \), where \( \theta \in \Theta \subset \mathbb{R} \) is a continuous tuning parameter. The probability of selecting response \( i \) is given by the Probability Mass Function (PMF) \( \pi_\theta(i) \). Let \( r_t : X \to \mathbb{R}_{\geq 0} \) be a bounded, non-negative true reward function. Our objective is to analyze the shape of the expected true reward function:
\[
f(\theta) = \mathbb{E}_{X \sim \pi_\theta}[r_t(X)] = \sum_{i=1}^m \pi_\theta(i) \cdot r_t(i).
\]
We will prove that under general conditions on the policy family \( \pi_\theta \), the function \( f(\theta) \) is either monotonic or has a unique interior extremum.

\begin{thm}[Inevitability of Reward Hacking for Discrete Policies]
\label{thm:discrete_hacking}
Let \( \{\pi_\theta\}_{\theta \in \Theta} \) be a family of PMFs on \( X = \{1,\dots,m\} \). Assume that:
\begin{enumerate}
    \item The PMF \( \pi_\theta(i) \) is strictly TP$_2$ in the pair \( (\theta, i) \). That is, for any \( \theta_1 < \theta_2 \) and \( i_1 < i_2 \),
    \[
    \pi_{\theta_1}(i_1)\pi_{\theta_2}(i_2) - \pi_{\theta_1}(i_2)\pi_{\theta_2}(i_1) > 0.
    \]

    \item The score function \( \psi(i, \theta) := \frac{\partial}{\partial \theta} \log \pi_\theta(i) \) exists and is strictly increasing in \( i \) for each fixed \( \theta \).
\end{enumerate}
Then, for any bounded, non-negative true reward function \( r_t(i) \), the derivative \( f'(\theta) \) changes sign at most once. Hence, \( f(\theta) \) is either monotonic or has a unique interior extremum.
\end{thm}
\begin{proof}
We differentiate:
\[
f'(\theta) = \frac{d}{d\theta} \sum_{i=1}^m \pi_\theta(i) r_t(i) = \sum_{i=1}^m \pi_\theta(i) \psi(i,\theta) r_t(i) = \mathbb{E}_{\pi_\theta}[r_t(i) \psi(i,\theta)].
\]
Define \( h_\theta(i) = r_t(i) \psi(i,\theta) \). Since \( r_t(i) \geq 0 \) and \( \psi(i,\theta) \) is strictly increasing in \( i \), the product \( h_\theta(i) \) has at most one sign change. Since \( f'(\theta) = \sum_{i=1}^m \pi_\theta(i) h_\theta(i) \), and \( \pi_\theta \) is strictly TP$_2$, Karlin’s variation-diminishing theorem implies that \( f'(\theta) \) changes sign at most once, and \( f(\theta) \) has at most one extremum.
\end{proof}
We now verify that Best-of-$n$ and Best-of-Poisson satisfy the assumptions. Let the base PMF be \( p_i \), with CDF \( F_k := \sum_{i=1}^k p_i \).

\paragraph{Best-of-$n$:} It is shown in \cite{beirami2024theoretical} that the BoN policy has the following PMF:
\[
\pi_n(i) = F_i^n - F_{i-1}^n.
\]
We first verify the TP$_2$ condition. We consider:
\[
L(i) := \frac{\pi_{n_2}(i)}{\pi_{n_1}(i)} = \frac{F_i^{n_2} - F_{i-1}^{n_2}}{F_i^{n_1} - F_{i-1}^{n_1}}.
\]
By Cauchy’s Mean Value Theorem (CMVT), for \( g(t) = t^{n_2} \), \( f(t) = t^{n_1} \), and \( a = F_i, b = F_{i-1} \), we get:
\[
\frac{g(F_i) - g(F_{i-1})}{f(F_i) - f(F_{i-1})} = \frac{g'(c_i)}{f'(c_i)} = \frac{n_2 c_i^{n_2 - 1}}{n_1 c_i^{n_1 - 1}} = \frac{n_2}{n_1} c_i^{n_2 - n_1}
\]
for some \( c_i \in (F_{i-1}, F_i) \). Since \( F_i \) is strictly increasing, \( c_i < c_{i+1} \), and \( c_i^{n_2 - n_1} \) is strictly increasing. Hence \( L(i+1) > L(i) \). Next, we note that the score function is:
\[
\psi(i,n) = \nabla_n \log(F_i^n - F_{i-1}^n).
\]
Let \( g(t) = t^n \log t \), \( f(t) = t^n \). Then
\[
\varphi(t) = \frac{g'(t)}{f'(t)} = \frac{n t^{n-1} \log t + t^{n-1}}{n t^{n-1}} = \log t + \frac{1}{n}.
\]
Applying CMVT gives:
\[
\psi(i,n) = \frac{g(F_i) - g(F_{i-1})}{f(F_i) - f(F_{i-1})} = \varphi(c_i) = \log c_i + \frac{1}{n}.
\]
Then:
\[
\psi(i+1,n) = \log c_{i+1} + \frac{1}{n} > \log c_i + \frac{1}{n} = \psi(i,n).
\]
\paragraph{Best-of-Poisson:} Define \( g(t,\mu) = t e^{\mu(t - 1)} \). Then, we show in Theorem \ref{thm:bop_distribution_general} that BoP has the following PMF:
\[
\pi_\mu(i) = g(F_i,\mu) - g(F_{i-1},\mu).
\]
We first verify the TP$_2$ condition. Define:
\[
L(i) = \frac{g(F_i,\mu_2) - g(F_{i-1},\mu_2)}{g(F_i,\mu_1) - g(F_{i-1},\mu_1)}.
\]
By CMVT:
\[
L(i) = \frac{\nabla_t g(c_i,\mu_2)}{\nabla_t g(c_i,\mu_1)} = \frac{(\mu_2 c_i + 1) e^{\mu_2(c_i - 1)}}{(\mu_1 c_i + 1) e^{\mu_1(c_i - 1)}} = \frac{\mu_2 c_i + 1}{\mu_1 c_i + 1} e^{(\mu_2 - \mu_1)(c_i - 1)}.
\]
This increases in \( c_i \in (F_{i-1}, F_i) \), hence \( L(i+1) > L(i) \). Now, we check the score function. Let the continuous score be \( \psi(x, \mu) = x - 1 + \frac{x}{\mu x + 1} \). Since this is strictly increasing in \( x \), so is the discrete score \( \psi(i, \mu) \).

\section{Best-of-Poisson}\label{appendix:proofs_poisson}

In this section, we prove Theorem \ref{thm:bop_kl} and establish, as a consequence, that Best-of-Poisson is numerically near-optimal as compared to tilted distribution $\pi_{\lambda}^\star$ in terms of KL divergence. We start by establishing the BoP distribution in the uniform case, and then consider the general discrete case. We also mention a natural extension called Soft Best-of-Poisson.
\subsection{Uniform Case}\label{appendix:subseq:uniform_case}
The following results hold under our Assumption \ref{assum:uniform_reward}. Let $\mu > 0$ be the parameter of the Best-of-Poisson sampling method and let $X_\mu$ be the random variable representing the response selected by BoP. The probability density function $q_\mu(x)$ of $X_\mu$ is given by:
\begin{align}
q_\mu(x) = (1+ \mu x)e^{\mu(x-1)},
\end{align}
for $x \in [0,1]$, where $n = n' + 1$ with $n' \sim \text{Poisson}(\mu)$.
\begin{proof}
Write $X_\mu=\max\{U_0,U_1,\dots,U_{n'}\}$ where \(n'\sim\mathrm{Poisson}(\mu)\) and \(U_i \iid \mathrm{Unif}[0,1]\). Consider \(U_0\sim\mathrm{Unif}[0,1]\) to be the mandatory draw to achieve a sample size of at least one.

For $x\in[0,1]$,
\[
F_\mu(x):=\Pr(X_\mu\le x)
 =\,\Pr\!\bigl(U_0\le x)\,\Pr\!\bigl(U_i\le x\text{ for }1\le i\le n'\bigr)
 =x\,\E\!\bigl[x^{n'}\bigr]
 =x\,e^{-\mu(1-x)},
\]
because $\E[x^{n'}]=\exp\{-\mu(1-x)\}$ is the moment-generating function of a Poisson variable evaluated at $\log x$. Now, differentiating $F_\mu$ on $(0,1)$ gives
\[
q_\mu(x)
        =e^{-\mu(1-x)}+\mu x\,e^{-\mu(1-x)}
        =(1+\mu x)\,e^{\mu(x-1)} ,
\]
which extends continuously to the endpoints.  A direct computation
verifies \(\int_{0}^{1}q_\mu(x)\,dx=1\), so $q_\mu$  is a valid density.
\end{proof}

Now, we prove Theorem \ref{thm:bop_kl} with the BoP density denoted as $q_\mu$ and a uniform reference distribution.

\begin{proof}
\[
\E[X_\mu]=\int_{0}^{1}x(1+\mu x)e^{\mu(x-1)}dx
         =\int_{0}^{1}(x+\mu x^{2})e^{\mu(x-1)}dx
\]
With the substitution \(u=\mu(x-1)\), we get that
\[
\int_{0}^{1}x e^{\mu(x-1)}dx
   =\frac{1}{\mu^{2}}\int_{-\mu}^{0}(u+\mu)e^{u}du
   =\frac{\mu-1+e^{-\mu}}{\mu^{2}}
\]
\[\int_{0}^{1}x^{2}e^{\mu(x-1)}dx
   =\frac{1}{\mu^{3}}\int_{-\mu}^{0}(u+\mu)^{2}e^{u}du
   =\frac{\mu^{2}-2\mu+2-2e^{-\mu}}{\mu^{3}}
\]
Hence
\[
\E[X_\mu]=\frac{\mu-1+e^{-\mu}}{\mu^{2}}
          +\mu\frac{\mu^{2}-2\mu+2-2e^{-\mu}}{\mu^{3}}
          =1-\frac{1}{\mu}+\frac{1-e^{-\mu}}{\mu^{2}}
\]
Now, we consider the KL divergence. Because \(\log q_\mu(x)=\log(1+\mu x)+\mu(x-1)\),
\[
\KL(q_\mu\| U)
 =\int_{0}^{1}q_\mu(x)\log(1+\mu x)\,dx
  +\mu\bigl[\E[X_\mu]-1\bigr]
\]
To compute the integral, set \(t=1+\mu x\):
\[
\int_{0}^{1}q_\mu(x)\log(1+\mu x)\,dx
  =\frac{e^{-\mu-1}}{\mu}\int_{1}^{\mu+1}t\,e^{t}\log t\,dt
\]
Integration by parts (\(f=\log t,\;dg=t e^{t}dt\)) yields
\[
\int t e^{t}\log t\,dt
    =\tfrac{1}{2}t^{2}e^{t}\Bigl(\log t-\tfrac{1}{2}\Bigr)
     -\tfrac{1}{2}\text{Ei}(t)+C
\]
hence
\[
\int_{0}^{1}q_\mu(x)\log(1+\mu x)\,dx
  =\log(\mu+1)-\frac{1-e^{-\mu}}{\mu}
   +\frac{e^{-\mu-1}}{\mu}\bigl[\text{Ei}(\mu+1)-\text{Ei}(1)\bigr]
\]
The resulting term becomes:
\[
\KL(q_\mu\|\text{Unif})
   =\frac{e^{-\mu-1}}{\mu}\bigl[\text{Ei}(\mu+1)-\text{Ei}(1)\bigr]
    +\log(\mu+1)-1 \qedhere
\]
\end{proof}
We now establish that BoP provides a practical approximation to the optimal tilted distribution with negligible performance loss.

\begin{thm}[Near-Optimality of BoP] \label{thm:bop_optimality}
Let $q_\mu$ be the distribution induced by Best-of-Poisson with parameter $\mu > 0$, and let $g_{\lambda}$ be the optimal KL-constrained tilted distribution with parameter $\lambda > 0$, defined as:
\begin{align}
g_{\lambda}(x) = \frac{\pi_{\text{ref}}(x)e^{\lambda r_p(x)}}{Z(\lambda)},
\end{align}
where $Z(\lambda)$ is the normalization constant. For any given expected reward level, there exists a $\mu$ for BoP and a $\lambda$ for the tilted distribution such that $\mathbb{E}_{X \sim q_\mu}[r_p(X)] = \mathbb{E}_{X \sim g_{\lambda}}[r_p(X)]$. Numerical evaluation shows that for all $\mu$, if $\lambda>0$ is chosen so that \(\mathbb{E}_{X\sim q_\mu}[r_p(X)]
=\mathbb{E}_{X\sim g_{\lambda}}[r_p(X)],\)
then the KL‐gap satisfies
\[
0 \le
\KL(q_\mu\|\pi_{\mathrm{ref}})
-\KL(g_{\lambda}\|\pi_{\mathrm{ref}})
\;\le\; 8 \times10^{-4}.
\]
\end{thm}
That is, Best-of-Poisson achieves nearly the optimal trade-off between expected reward and KL divergence from the reference distribution.

\begin{proof}
Let the following two functions
\[
q_{\mu}(x)\;=\;\bigl(1+\mu x\bigr)\,e^{\mu(x-1)},
\qquad
g_{\lambda}(x)\;=\;\frac{\lambda\,e^{\lambda x}}{e^{\lambda}-1},
\qquad 0\le x\le1,
\]
denote, respectively, the BoP density with the Poisson
rate~$\mu$ and the exponential‑tilt density with parameter~$\lambda$. For any $\mu > 0$, the value $\lambda^\star=\lambda^\star(\mu)$ is chosen so that the two laws have the same first moment (i.e., same expected reward). We first note the expected reward of the BoP policy lies in the interval $(0.5, 1)$. Additionally, the tilted expected reward is a strictly increasing function of $\lambda$ over the same range since its derivative is the (positive) variance of the reward under the tilted distribution. By the Intermediate Value Theorem, there exists a unique \( \lambda^\star = \mu + \delta(\mu) \), where \( \delta(\mu) = \lambda^\star(\mu) - \mu \), such that the two rewards are equal.
\\~\\ Because $g_{\lambda(\mu)}$ is the information projection of $q_{\mu}$
onto the exponential family
$\{g_{\lambda}\}_{\lambda>0}$ under the linear reward-matching constraint, the Csiszár–Pythagoras identity \cite{csiszar2004information} gives
\begin{equation}
      D_{\text{KL}}(q_\mu \| \pi_{\text{ref}}) - D_{\text{KL}}(g_{\lambda(\mu)} \| \pi_{\text{ref}}) = D_{\text{KL}}(q_\mu \| g_{\lambda(\mu)})
\end{equation}

To bound this KL divergence, we use a general inequality which we derive below from the properties of the likelihood ratio function. Let \( L(x) = \frac{p(x)}{q(x)} \) denote the likelihood ratio between two distributions $p$ and $q$. We first define the KL divergence in terms of the convex function \( \varphi(t) = t \log t - t + 1 \) and then note that the expectation of any function is upper bounded by its essential supremum. We can retrieve the following upper bound:
\begin{equation}
 D_{\text{KL}}(p \,\|\, q) =\mathbb{E}_q[\varphi(L(x))] \leq \sup_{x} \varphi(L(x)) = \sup_{t \in \operatorname{Im}(L)} \varphi(t)   
\end{equation}
In our problem, the likelihood ratio is given by
\begin{equation}
  L_\mu(x) := \frac{q_\mu(x)}{g_{\lambda^\star}(x)} = \frac{(\mu x + 1)(e^{\lambda^\star} - 1)}{\lambda^\star e^{\mu}} \cdot e^{-\delta(\mu) x}  
\end{equation}
To obtain a uniform bound on \( L_\mu(x) \), define
\(
\alpha := \sup_{\mu > 0,\, x \in [0,1]} \left| \log L_\mu(x) \right|
\). This is a well-posed, nested optimization problem requiring numerical solution. The procedure is as follows:
\begin{enumerate}
    \item For a given \( \mu \), numerically solve for the unique \( \lambda^\star \) satisfying the reward-matching equation.
    \item For the resulting pair \( (\mu, \lambda^\star) \), compute \( \max_{x \in [0,1]} |\log L_\mu(x)| \) by evaluating the boundary values and any interior critical points.
    \item Maximize the result of step (2) over all \( \mu > 0 \).
\end{enumerate}
This procedure is deterministic, and its solution yields a uniform bound:
\[
e^{-\alpha} \leq L_\mu(x) \leq e^{\alpha}, \quad \text{for all } \mu > 0 \text{ and } x \in [0,1] \quad \text{with }\alpha \approx 0.03968 
\]
Since the function \( \varphi(t)\) is convex, we verify that its supremum over \( [e^{-\alpha}, e^{\alpha}] \) is attained at an endpoint:
\[
D_{\text{KL}}(q_\mu \| g_{\lambda(\mu)}) \leq \varphi(e^\alpha) = \alpha e^\alpha - e^\alpha + 1 \approx 0.03968 \cdot e^{0.03968} - e^{0.03968} + 1 \approx 8 \times 10^{-4}
\]
This bound is independent of \( \mu \), and therefore holds uniformly as seen in Figure \ref{fig:KL_difference_BoP}. This validates that Best-of-Poisson indeed provides a practically equivalent approximation to the optimal tilted distribution with negligible computational overhead.
\end{proof}
\subsection{Discrete Case}\label{appendix:subseq:discrete_case_BoP}
We now relax our Assumption \ref{assum:uniform_reward} to derive an exact, assumption-free formulation for the Best-of-Poisson (BoP) policy, as well as a convenient upper bound for its KL divergence.
\begin{thm}[General BoP Distribution] \label{thm:bop_distribution_general}
Let $\mu > 0$ be the parameter of the Best-of-Poisson sampling method. The probability mass function of Best-of-Poisson is:
\begin{align}
\pi^{(\mu)}_{\mathrm{BoP}}(y_i \mid x) = g_\mu(F_i) - g_\mu(F_{i-1})
\quad \text{where} \quad g_\mu(z) := z e^{\mu(z - 1)} \\
\mathrm{KL}(\pi_{\mathrm{BoP}} \,\|\, \pi_{\mathrm{ref}}) \lesssim  \mathbb{E}[\log N] - 1 + \mathbb{E}\left[ \frac{1}{N} \right] \quad \text{where} \quad N \sim \mathrm{Poisson}(\mu) + 1
\end{align}
\end{thm}
\begin{proof}
BoP is a mixture over BoN with a Poisson-distributed number of samples \(N \sim \mathrm{Pois}(\mu) + 1\), with PMF:
\[
\Pr(N = k) = \frac{e^{-\mu} \mu^{k-1}}{(k-1)!}, \quad k \ge 1
\]
The BoP policy is:
\[
\pi_{\mathrm{BoP}}(y_i \mid x) = \sum_{k=1}^\infty \Pr(N = k) \cdot \pi_{\mathrm{BoN}}^{(k)}(y_i \mid x) = \sum_{k=1}^\infty \frac{e^{-\mu} \mu^{k-1}}{(k-1)!}(F_i^k - F_{i-1}^k)
\]
Letting \(j = k - 1\), this becomes:
\[
\pi_{\mathrm{BoP}}(y_i \mid x) = e^{-\mu} \left( \sum_{j=0}^\infty \frac{(\mu F_i)^j}{j!} - \sum_{j=0}^\infty \frac{(\mu F_{i-1})^j}{j!} \right)
= F_i e^{\mu(F_i - 1)} - F_{i-1} e^{\mu(F_{i-1} - 1)}
\]
Thus, the exact BoP PMF is:
\[
\pi_{\mathrm{BoP}}(y_i \mid x) = g_\mu(F_i) - g_\mu(F_{i-1})
\quad \text{where} \quad g_\mu(z) := z e^{\mu(z - 1)}
\]
We show that this is consistent with our continuous case. Assuming the reference distribution is \(\mathrm{Unif}(0,1)\), then \(F_{\mathrm{ref}}(r) = r\) and:
\begin{equation}
    G_{\mathrm{BoP}}(r) = g_\mu(r) = r e^{\mu(r - 1)} \implies q_\mu(r) = \frac{d}{dr} G_{\mathrm{BoP}}(r) = (1 + \mu r) e^{\mu(r - 1)}
\end{equation}
This matches the continuous BoP PDF.    

Using the exact PMF, we can get the exact KL divergence between the Best-of-Poisson distribution and the reference distribution, as well as a convenient upper bound.
\begin{equation}
\mathrm{KL}(\pi_{\mathrm{BoP}} \| \pi_{\mathrm{ref}}) = \sum_{i=1}^m \pi_{\mathrm{BoP}}(y_i \mid x) \log \left( \frac{\pi_{\mathrm{BoP}}(y_i \mid x)}{p_i} \right)
= \sum_{i=1}^m (g_\mu(F_i) - g_\mu(F_{i-1})) \log \left( \frac{g_\mu(F_i) - g_\mu(F_{i-1})}{F_i - F_{i-1}} \right) 
\end{equation}
We show that this is consistent with our continuous case. As \(m \to \infty\), the discrete sum becomes:
\[
\mathrm{KL}(\pi_{\mathrm{BoP}} \| \pi_{\mathrm{ref}}) \to \int_0^1 q_\mu(r) \log q_\mu(r) \, dr
\]
where \( q_\mu(r) = (1 + \mu r) e^{\mu(r - 1)} \). This integral is:
\[
\int_0^1 q_\mu(r) \log q_\mu(r) \, dr = \frac{e^{-\mu - 1}}{\mu} \left( \mathrm{Ei}(\mu + 1) - \mathrm{Ei}(1) \right) + \log(\mu + 1) - 1
\]
where \(\mathrm{Ei}(z)\) is the exponential integral. The KL divergence is convex in its first argument. That is, for any set of distributions \(\{P_k\}\) and weights \(\{w_k\}\),
\[
\mathrm{KL}\left( \sum_k w_k P_k \,\Big\|\, Q \right)
\leq \sum_k w_k \cdot \mathrm{KL}(P_k \,\|\, Q)
\]
Applying this to BoP:
\[
\mathrm{KL}(\pi_{\mathrm{BoP}} \,\|\, \pi_{\mathrm{ref}})
\leq \sum_{k=1}^\infty w_k \cdot \mathrm{KL}(\pi_{\mathrm{BoN}}^{(k)} \,\|\, \pi_{\mathrm{ref}})
= \mathbb{E}_{N \sim \mathrm{Pois}(\mu)+1} \left[ \mathrm{KL}(\pi_{\mathrm{BoN}}^{(N)} \,\|\, \pi_{\mathrm{ref}}) \right]
\]
Using the proven upper bound for BoN in \cite{beirami2024theoretical}, we get that
\[
\mathrm{KL}(\pi_{\mathrm{BoP}} \,\|\, \pi_{\mathrm{ref}}) \lesssim \mathbb{E}_{N} \left[ \log N - 1 + \frac{1}{N} \right] = \mathbb{E}[\log N] - 1 + \mathbb{E}\left[ \frac{1}{N} \right] \quad \text{where} \quad N \sim \mathrm{Poisson}(\mu) + 1
\]
\end{proof}
\subsection{Soft Best-of-Poisson}\label{appendix:subseq:soft_bop}
We discuss here a natural extension of Soft Best-of-$n$ and Best-of-Poisson, which we name Soft Best-of-Poisson (SBoP) (see Algorithm \ref{alg:sbop}). The key insight is that we can leverage the two distinct control mechanisms that inference-time alignment offers us: control over the number of generation (be it deterministic or randomized) and control over the selection through the temperature. Extensions such as SBoP show that these methods can be combined, offering richer control over the alignment process.
\begin{algorithm}[h!]
\caption{\textbf{Soft Best-of-Poisson Sampling (SBoP)}}
\label{alg:sbop}
\begin{algorithmic}[1]
\State \textbf{Input:} Poisson parameter \(\mu > 0\), inverse temperature \(\lambda > 0\), base policy \(\pi_{\mathrm{ref}}\)
\State Sample \(n' \sim \mathrm{Poisson}(\mu)\), set \(n = n' + 1\)
\State Draw \(X_1, \dots, X_n \sim \pi_{\mathrm{ref}}\) i.i.d.
\State Compute rewards \(R_i = r_p(X_i)\)
\State Sample index \(Z \in \{1, \dots, n\}\) with probability
\[
\Pr(Z = i) = \frac{e^{\lambda R_i}}{\sum_{j=1}^n e^{\lambda R_j}}
\]
\State \textbf{Return:} \(Y = X_Z\)
\end{algorithmic}
\end{algorithm}

\begin{thm}\label{thm:sbop}
Let the parameters of SBoN be \( (n, \lambda)\). Let \(R_1, \dots, R_n\) be the proxy rewards of \(n\) i.i.d. samples from \(\pi_{\text{ref}}\). By Assumption \ref{assum:uniform_reward}, \(R_i \sim \mathrm{Unif}(0,1)\) for all \(i\). The selected sample has proxy reward \(R_Y\) with probability density function:
\begin{equation}
p_{n,\lambda}(r) = n \cdot \mathbb{E}_{R_2, \dots, R_n \sim \mathrm{Unif}(0,1)}\left[ \frac{e^{\lambda r}}{e^{\lambda r} + \sum_{j=2}^{n} e^{\lambda R_j}} \right]    
\end{equation}
Let the log-partition function be defined as:
\[
L(n, \lambda) := \mathbb{E}_{R_1, \dots, R_n \sim \mathrm{Unif}(0,1)} \left[ \log \left( \sum_{i=1}^{n} e^{\lambda R_i} \right) \right]
\]
Then, the expected reward and the KL divergence with respect to the reference can be written as:
\begin{equation}
    \mathbb{E}[R_{\text{SBoN}}] = \frac{\partial L(n, \lambda)}{\partial \lambda}
\end{equation}
\begin{equation}
    \mathrm{KL}(\pi_{\text{SBoN}} \| \pi_{\text{ref}}) = \log n + \lambda \, \mathbb{E}[R_{\text{SBoN}}] - L(n, \lambda)
\end{equation}
\end{thm}

\begin{proof}
\textbf{Expected Reward:}
\[
\mathbb{E}[R_{\text{SBoN}}] = \mathbb{E}\left[ \sum_{i=1}^{n} R_i \cdot \frac{e^{\lambda R_i}}{\sum_{j=1}^{n} e^{\lambda R_j}} \right]
= \mathbb{E}\left[ \frac{ \sum_{i=1}^{n} R_i e^{\lambda R_i} }{ \sum_{j=1}^{n} e^{\lambda R_j} } \right] = \frac{\partial L(n, \lambda)}{\partial \lambda}
\]
since we observe:
\[
\frac{\partial}{\partial \lambda} \log \left( \sum_{j=1}^{n} e^{\lambda R_j} \right)
= \frac{ \sum_{i=1}^{n} R_i e^{\lambda R_i} }{ \sum_{j=1}^{n} e^{\lambda R_j} }
\]
\textbf{KL Divergence:}
Since \(p_{\text{ref}}(r) = 1\) for \(r \in [0,1]\), we get:
\[
\mathrm{KL}(\pi_{\text{SBoN}} \| \pi_{\text{ref}}) = \int_0^1 p_{n,\lambda}(r) \log p_{n,\lambda}(r) \, dr
\]
Alternatively, the conditional KL (per draw) is:
\[
\mathrm{KL}_{\text{cond}} = \sum_{i=1}^{n} p_i \log(n p_i) = \log n + \sum_{i=1}^{n} p_i \log p_i =  \log n + \lambda \sum_{i=1}^{n} p_i R_i - \log \left( \sum_{j=1}^{n} e^{\lambda R_j} \right)
\]
where \(p_i = \frac{e^{\lambda R_i}}{\sum_{j=1}^{n} e^{\lambda R_j}}\). Taking expectation over rewards gives our result.
\end{proof}

\section{Analysis of \texttt{HedgeTune}}\label{appendix:proof_threshold_characterization}
In Section \ref{sec:hedging}, we presented an efficient way to find the optimal hacking threshold. Here, we prove this result and discuss how hedging compares to other hacking mitigation methods.

\begin{proof}
We consider each mechanism separately:

\paragraph{Hedging in Best-of-$n$.}\label{sec:hedging_BoN} In BoN, we approximate the integer $n$ via a continuous parameter $\alpha$ by placing a $\text{Beta}(\alpha, 1)$ prior on $u$. Its density is $f_{\alpha}(u)=\alpha u^{\alpha-1}$, so $ \psi(u, \alpha)= \partial_\alpha [\ln \alpha +(\alpha-1)\ln u] = \tfrac{1}{\alpha}+\ln u$. Thus, the optimality condition becomes
\begin{equation}
\mathbb{E}_{u \sim \text{Beta}(\alpha, 1)} \left[r_t(u)\left(\tfrac{1}{\alpha}+\ln u\right)\right]=0 \Longleftrightarrow \int_0^1r_t(u)\,\bigl(1+n\log u\bigr)\,u^{n-1}\,du=0,
\end{equation}
which one solves for $\alpha$ to pick an effective sample size. 

In practice, this equation must be solved numerically. One way to do this by discretizing $[0,1]$ into $M$ points and forming the Riemann-sum residual
$$R(\alpha)=\sum_{i=1}^M r_t(u_i)\left( \frac{1}{\alpha} + \ln u_i \right)u_i^{\alpha-1} \Delta u,$$
The root $R(\alpha)=0$ is equivalent to the hedging condition. Then, one applies any root-finding method, see, e.g., \cite{quarteroni2006numerical}, to locate the unique solution $\alpha^\dagger$. Alternatively, for each question/prompt, we first sort the candidates by their model scores to form the empirical CDF of the scores. We perform a \emph{discrete ternary search} to identify the optimal \(n^\dagger\) using the question-averaged true reward. This procedure bypasses explicit root-finding and directly identifies the most effective discrete hedge size.

\textbf{Hedging in Soft Best-of-n.} Unlike BoN, SBoN does not admit a simple closed-form density due to its sampling mechanism. In SBoN, we first sample $n$ uniform rewards $\textbf{U}^n = U_1, \ldots, U_n$, then select $U_i$ with probability $p_i = \frac{e^{\lambda U_i}}{\sum_{j=1}^n e^{\lambda U_j}}$. The resulting distribution is:
\begin{equation}
    \pi_{n,\lambda}(u)
= \mathbb{E}_{U_1,\ldots,U_{n-1} \sim \text{Unif}[0,1]}
\left[ \frac{n,e^{\lambda u}}{e^{\lambda u} + \sum_{i=1}^{n-1} e^{\lambda U_i}} \right],
\qquad u \in [0,1].
\end{equation}

We can now derive the hedging condition  $\frac{\partial}{\partial \lambda} \mathbb{E}_{u \sim f_\lambda}[r_t(u)] = 0$. To start, we compute $\frac{\partial p_i}{\partial \lambda}$:
\begin{equation}
\frac{\partial p_i}{\partial \lambda} = \frac{\partial}{\partial \lambda}\left[\frac{e^{\lambda U_i}}{S}\right] = \frac{U_i e^{\lambda U_i} \cdot S - e^{\lambda U_i} \cdot \frac{\partial S}{\partial \lambda}}{S^2} = p_i \left[U_i - \sum_{j=1}^n U_j p_j\right]
\end{equation}
since $\frac{\partial S}{\partial \lambda} = \sum_{j=1}^n U_j e^{\lambda U_j}$. Now, we compute the derivative of the expected reward:
\begin{align}
\frac{\partial}{\partial \lambda} \mathbb{E}_{u \sim f_\lambda}[r_t(u)] &= \mathbb{E}_{\textbf{U}^n}\left[\frac{\partial}{\partial \lambda} \sum_{i=1}^n r_t(U_i) p_i\right] \\
&= \mathbb{E}_{\textbf{U}^n}\left[\sum_{i=1}^n r_t(U_i) \frac{\partial p_i}{\partial \lambda}\right] \\
&= \mathbb{E}_{\textbf{U}^n}\left[\sum_{i=1}^n r_t(U_i) p_i \left(U_i - \sum_{j=1}^n U_j p_j\right)\right] \\ &= \mathbb{E}_{\textbf{U}^n}\left[\sum_{i=1}^n r_t(U_i) p_i U_i - \left(\sum_{i=1}^n r_t(U_i) p_i\right)\left(\sum_{j=1}^n U_j p_j\right)\right] \\&= \mathbb{E}_{\textbf{U}^n}[\text{Cov}(r_t(V), V | \textbf{U})]
\end{align}
where we note that the expression inside the expectation is exactly the conditional covariance:
\begin{equation}
\text{Cov}(r_t(V), V |\textbf{U}^n) = \mathbb{E}[r_t(V) \cdot V | \textbf{U}^n] - \mathbb{E}[r_t(V) | \textbf{U}^n] \cdot \mathbb{E}[V | \textbf{U}^n]
\end{equation}
with
\begin{equation}
\mathbb{E}[r_t(V) |\textbf{U}^n] = \sum_{i=1}^n r_t(U_i) p_i, \quad
\mathbb{E}[V | \textbf{U}^n] = \sum_{i=1}^n U_i p_i, \quad
\mathbb{E}[r_t(V) \cdot V | \textbf{U}^n] = \sum_{i=1}^n r_t(U_i) U_i p_i 
\end{equation}
This condition must be evaluated numerically using the following procedure:

\begin{enumerate}
    \item For a given $\lambda$, and for each question, draw $M$ independent samples 
    $(U_1^{(m)}, \ldots, U_n^{(m)})$ from the \emph{empirical CDF} of model scores 
    (equivalently, from the empirical quantiles $u \in [0,1]$), for $m = 1,\ldots,M$.
    
    \item For each realization, compute:
    \begin{equation}
         p_i^{(m)} = 
    \frac{e^{\lambda U_i^{(m)}}}{\sum_{j=1}^n e^{\lambda U_j^{(m)}}}.
    \end{equation}
    \begin{equation}
    C^{(m)} = \sum_{i=1}^n r_t(U_i^{(m)})\,U_i^{(m)}\,p_i^{(m)}
    - \left(\sum_{i=1}^n r_t(U_i^{(m)})\,p_i^{(m)}\right)
      \left(\sum_{j=1}^n U_j^{(m)}\,p_j^{(m)}\right)
    \end{equation}
    
    \item Estimate the residual as
    \begin{equation}
    R(\lambda) = \frac{1}{M}\sum_{m=1}^M C^{(m)}.
    \end{equation}
    
    \item Locate $\lambda^\dagger$ such that $R(\lambda^\dagger)=0$ using a numerical root-finding method (e.g., bisection or Newton’s method).
\end{enumerate}

\textbf{Hedging in Best-of-Poisson.}\label{sec:hedging_BoP} Here, one draws a $\text{Poisson}(\mu)$ number of samples (plus one) and selects the proxy-maximal $u$. For uniform $u$, the density is $p_{\mu}(u)=(\mu u +1)e^{-\mu(1-u)}$, giving $\psi(u, \mu) = \partial_\mu \ln p_{\lambda}(u) =u-1 + \tfrac{u}{\mu u +1}$. The hedging equation 
\begin{equation}
\nabla_\mu\mathbb{E}_{\pi_\mu}[r_y(X)]=\mathbb{E}_{u \sim f_{\mu}}\bigg[r_t(u)\left(u-1+\tfrac{u}{\mu u+1}\right)\bigg]=0
\end{equation}
In an analogous way to the previous hedging equations, we solve the residual
\begin{equation}
R(\mu)=\sum_{i=1}^M r_t(u_i) \psi(u_i, \mu)p_\mu (u_i)\Delta u
\end{equation}
to locate $\mu^\dagger$ such that $R(\mu^\dagger)=0$.  This $\mu^\dagger$ is the Poisson optimal hedge. As in BoN, we approximate this expectation using the empirical CDF of scores per question, mapping each candidate’s rank to its quantile \(u_i\). We compute the averaged residual across questions, and the root $\bar R(\mu^\dagger)=0$ is found by \emph{bracketing}, i.e., finding an interval $[a,b]$ such that $\bar R(a)\bar R(b)<0$, and \emph{bisection search within this interval}.
\end{proof}

\subsection{How does hedging compare to other hacking mitigation methods?}\label{appendix:subseq:comparison_other_techniques}
In this subsection, we supplement the discussion in the related works (Section \ref{sec:related_works}). We discuss different techniques mitigate reward hacking and compare with hedging.

\paragraph{Training-Time Regularization.}
This is the most common approach, typified by adding a divergence penalty to the reward maximization objective as in RLHF. The goal is to prevent the policy from deviating too far from a trusted reference model. While a vital first line of defense, this approach requires expensive full model retraining and risks under-optimization if the penalty is too strong.

\paragraph{Improving the Proxy Reward Model.}
A second category aims to build a more robust reward signal that is inherently harder to hack. Ensembling involves averaging scores from multiple, independently trained reward models. However, this method multiplies inference costs and cannot fix systemic biases shared by all models in the ensemble. Meanwhile, robust training methods \cite{liu_rrm_2025, miao2024inform} modify the reward model's training process itself to disentangle response quality from spurious cues, introducing significant complexity to the training pipeline.

\paragraph{Inference-Time Methods.} The closest analogues to our work include regularized Best-of-$n$ (RBoN), which is a powerful heuristic that balances the proxy reward against a penalty term \cite{jinnai2024regularized}, and inference-time pessimism  \cite{huang2025best} which uses principled rejection sampling step, requiring the estimation of a normalization constant.

\paragraph{Our Framework:} Hedging introduces a distinct approach that operates purely at inference time. It answers the question: given a fixed reward model and fixed language model, what is the best we can do to improve performance just at inference? Hedging offers two key advantages:

\begin{enumerate}
    \item \textbf{Zero Retraining Cost and Maximum Flexibility:} Hedging is applied to an existing, trained policy using existing proxy reward models. A practitioner can take a single policy and calibrate it against different proxy rewards or for different tasks without incurring any retraining. 

   \item \textbf{Principled and Efficient.} We prove that this expected true reward function follows a predictable ``rise-and-fall'' path with a single optimal peak. \textit{HedgeTune} therefore replaces heuristic balancing or more involved sampling schemes with a principled search for this unique optimum---a problem we prove is tractable.
\end{enumerate}

\noindent
However, the gains from hedging might be minimal compared to more involved methods, especially with a weak proxy reward. One limitation of our work is that hedging requires a tuning ``ground truth'' dataset. Ultimately, we see hedging not as a replacement for some of the previous methods. Instead, \texttt{HedgeTune} offers a computationally lightweight, theoretically-grounded, and highly flexible method to mitigate hacking at deployment.
\section{Experimental Details}\label{appendix:experimental_details}
In this section, we provide additional details on our experimental setup. We provide our code \href{https://github.com/hskhalaf/hedging}{here}. The experiments with the toy example and the verifiable rewards were done on a CPU (Apple M3 Chip). The experiments with the human-preferences were done on 1 A100 GPU with 48 GPU hours (including preliminary experiments and testing).

\subsection{Toy Example}\label{sec:toy}
In Figure \ref{fig:win_rate_hacking}, we present a toy example with one-dimensional rewards defined on $[0, 1]$. We set the proxy reward to be  \(r_p(x)=x\) and the gold reward
\[
r_t(x) \;=\;\frac{x^p\,(1 - x)}{C}, 
\quad
C \;=\;\Bigl(\tfrac{p}{p+1}\Bigr)^p\,\tfrac{1}{p+1}
\]
where $C$ is a normalization constant so that the gold reward is bounded between 0 and 1 for convenience. We choose this gold reward, as the reward under the Best-of-$n$ distribution has a simple closed-form solution. We set $p = 12$ and  we find the expected value of true reward and the KL divergence with respect to the reference distribution under four mechanisms:
\begin{enumerate}
  \item \textbf{Tilted Distribution:}  Exponential tilting of the gold reward, \(Q_\lambda(x)\propto \exp\bigl(\lambda\,r_t(x)\bigr)\) and of the proxy reward, \(Q_\lambda(x)\propto \exp\bigl(\lambda\,r_p(x)\bigr)\).
  \item \textbf{Best‐of‐\(n\) (BoN):}  Selection of the maximum of \(n\) i.i.d.\ uniform draws.
  \item \textbf{Soft Best‐of‐\(n\) (SBoN):}  Softmax–based sampling of \(n\) i.i.d.\ uniform draws with inverse temperature \(\lambda\).
  \item \textbf{Best‐of‐Poisson (BoP):}  Selection of the maximum of $n$ i.i.d uniform draws where $n$ is drawn from a Poisson distribution with rate \(\mu\).
\end{enumerate}
We apply \texttt{HedgeTune} as presented in Alg. \ref{alg:hedgetuner} for BoN and BoP. In both cases, we obtain an operating point that corresponds exactly to the true hacking threshold, as shown in Figure \ref{fig:win_rate_hacking}. The success of this algorithm hinges on Theorem \ref{thm:hacking_general} which guarantees the existence of (at least) one hacking threshold. However, this guarantee does not hold for Soft BoN. Therefore, the optimization problem becomes more challenging, and using vanilla estimators for the density causes numerical instabilities when applied for Soft BoN.

\subsection{Reward hacking in verifiable settings.}\label{appendix:subseq:hacking_verifiable_settings}
We include additional experiments from running \texttt{HedgeTune} with SBoN on the \textbf{PPE} dataset.  \texttt{HedgeTune} finds the best operating temperature for a given $n$. We plot the expected value of the accuracy and show the retrieved maximizing temperature. In particular, with larger $n$, we need a larger temperature to mitigate hacking. This corresponds to a stronger KL regularization.
\begin{figure}[t]
    \centering
    \includegraphics[width=\linewidth]{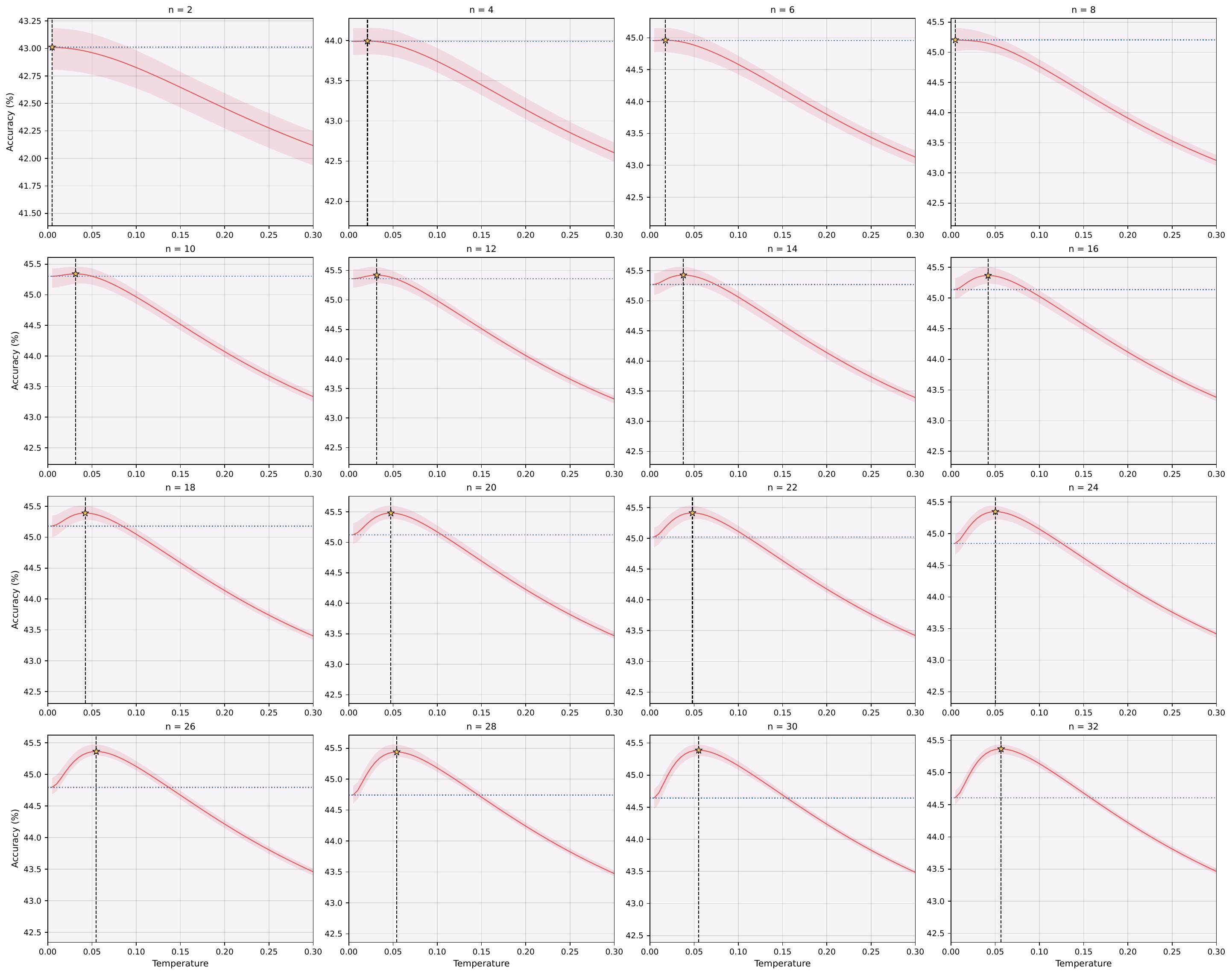}
     \caption{Accuracy vs. temperature for different sample sizes $n$ with GPQA and Skywork Llama-3.1 8B. \texttt{HedgeTune} identifies the optimal temperature (dashed line) for each $n$.}
    \label{fig:gpqa-temp}
\end{figure}

\begin{figure}[H]
    \centering
    \includegraphics[width=\linewidth]{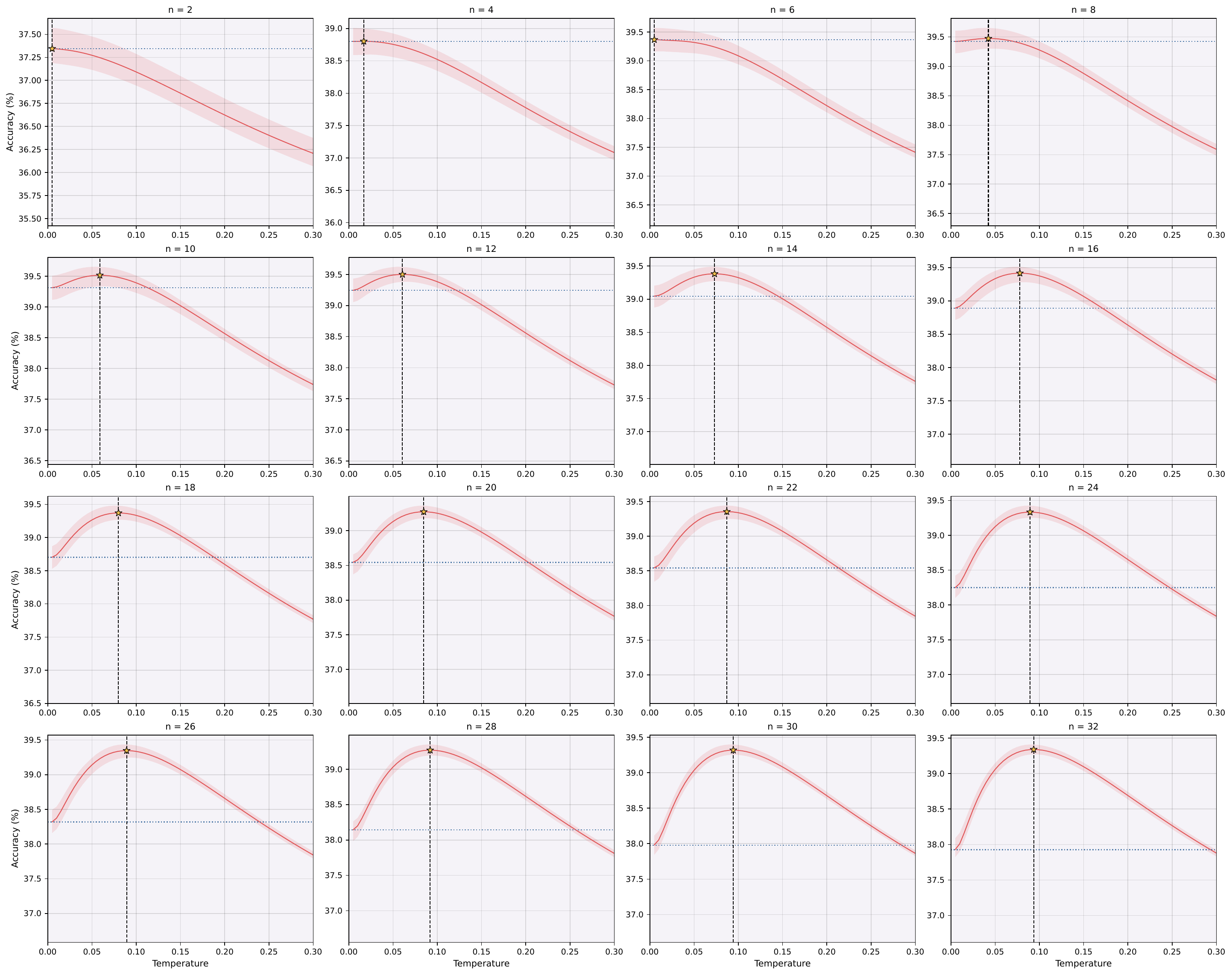}
     \caption{Accuracy vs. temperature for different sample sizes $n$ with MATH and InternLM2 1.8B. \texttt{HedgeTune} identifies the optimal temperature (dashed line) for each $n$.}
    \label{fig:math-temp}
\end{figure}
\newpage

\begin{figure}[H]
    \centering
    \includegraphics[width=\linewidth]{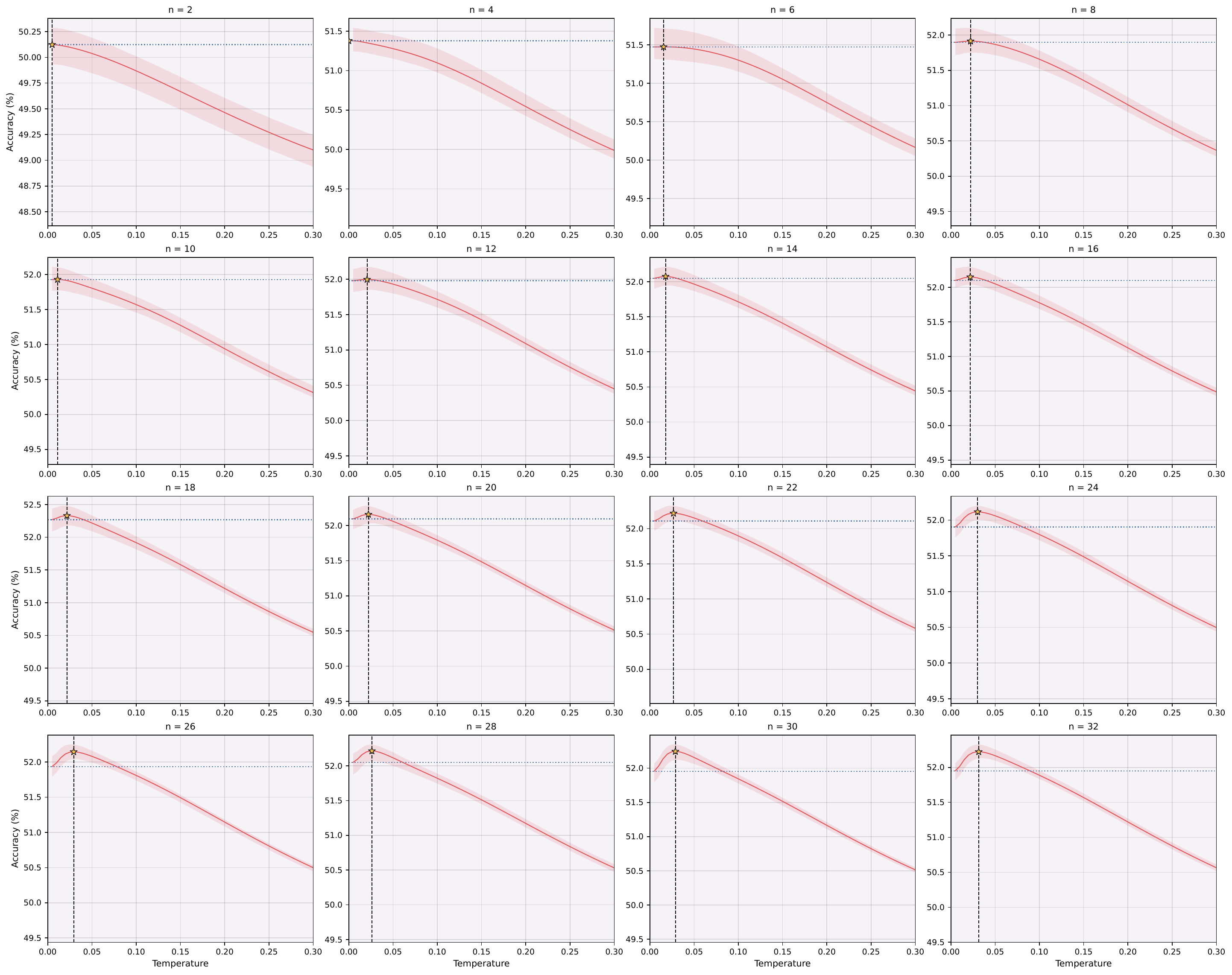}
     \caption{Accuracy vs. temperature for different sample sizes $n$ with MMLU and InternLM2 1.8B. \texttt{HedgeTune} identifies the optimal temperature (dashed line) for each $n$.}
    \label{fig:mmlu-intern-temp}
\end{figure}
\newpage

\begin{figure}[H]
    \centering
    \includegraphics[width=\linewidth]{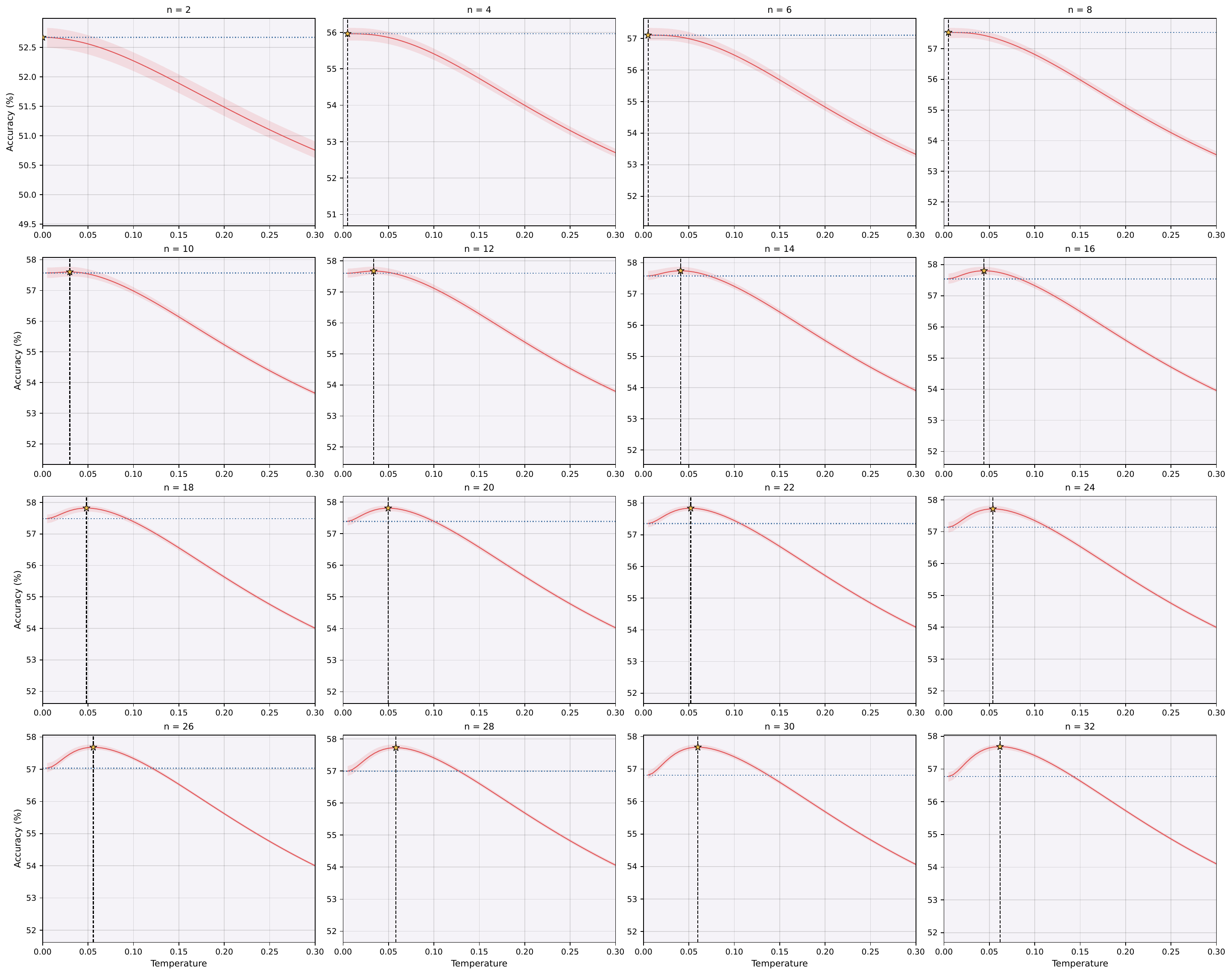}
     \caption{Accuracy vs. temperature for different sample sizes $n$ with MMLU and Llama3 OffsetBias 8B. \texttt{HedgeTune} identifies the optimal temperature (dashed line) for each $n$.}
    \label{fig:mmlu-llama-temp}
\end{figure}

\newpage
\newpage
\subsection{Reward hacking in the wild}\label{appendix:subseq:hacking_wild}
To observe reward hacking \textit{in the wild}, we follow the setup of Coste et al. \cite{coste_reward_2024}. We first use an annotated dataset provided by \cite{coste_reward_2024} which contains 1,000 prompts from the validation split of AlpacaFarm dataset, along with 12,600 response generations per prompt from a 1.4b fine-tuned Pythia model. Each prompt-response pair is labeled with the AlpacaFarm \texttt{reward-model-human} to give `gold' scores. Next, we would like to train proxy reward models on the preferences of this true reward. We randomly sample a prompt with two responses from the annotated dataset and curate a dataset of the form (prompt, chosen, rejected) where the chosen response is the response with the higher gold reward score. We follow this procedure to curate four datasets with varying sizes (10k, 20k, 46k, 80k). For each dataset, we consider two variants: one with no label noise and one with random 25\% label noise. Next, we use the code kindly provided by the authors of \cite{coste_reward_2024} in their GitHub repository to train proxy reward models with the different datasets over four random seeds (1, 2, 3 and 4) using their default hyperparameters (e.g., $10^{-5}$ learning rate and five epochs). Lastly, we score the annotated dataset using the trained proxy reward models. The end result is a set of 800 prompts, 12 600 responses per prompt, along with gold and proxy scores for each prompt-response pair. 

While reward hacking can appear without label noise (see left panels of Figures \ref{fig:10k_combined} and \ref{fig:20k_combined}), reward hacking is more pronounced with label noise as expected. Moreover, reward hacking is more apparent when the proxy reward is trained on \textbf{less} data. One potential explanation is that, with fewer training examples, the proxy is less well‑calibrated and its estimation errors vary more sharply across inputs. In that case, a small $n$ might produce a deceiving reward gain. In contrast, errors may surface early on with a large training dataset, so true reward declines immediately as sampling increases. In cases of reward hacking, we see that SBoN with an appropriately chosen $\lambda$ can (1) achieve the maximum reward achieved by BoN/BoP and (2) mitigate reward hacking, as shown with the reward almost flatlining after it reaches its peak value. We also witness cases where the proxy reward \textbf{always} misaligns with the gold reward, causing a collapse of true reward from the onset of BoN. In that case, the optimal hedging behavior is a uniform selection over responses, which is recovered with $n = 1$ for BoN or $\lambda = 0$ for SBoN. 

Interestingly, we observe instances of what we call \textbf{reward grokking} as shown in the right panels of Figures \ref{fig:20k_combined} and \ref{fig:80k_combined}, where the true reward decreases or flat-lines across low- to mid-range sample counts, only to undergo a sudden uptick at higher sample regimes, revealing a delayed but apparent realignment of proxy and true objectives. We leave detailed investigation of reward grokking and its implications for hedging strategies to future work.
\newpage
\newpage
\begin{figure}[H]
  \centering
  \begin{minipage}[t]{0.48\linewidth}
    \centering
    \includegraphics[width=\linewidth]{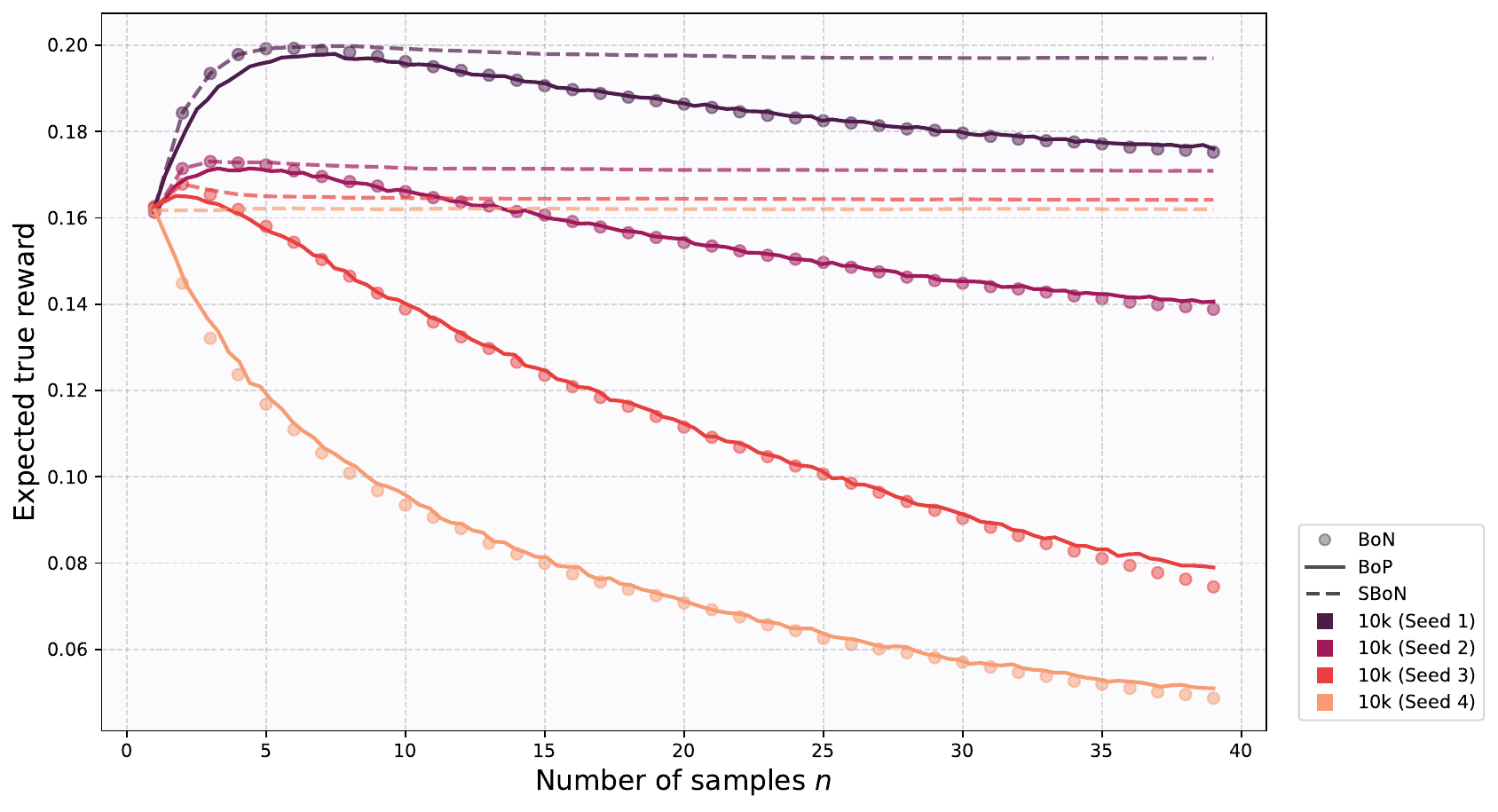}
    
    \textbf{(a)} No label noise
  \end{minipage}%
  \hfill
  \begin{minipage}[t]{0.48\linewidth}
    \centering
    \includegraphics[width=\linewidth]{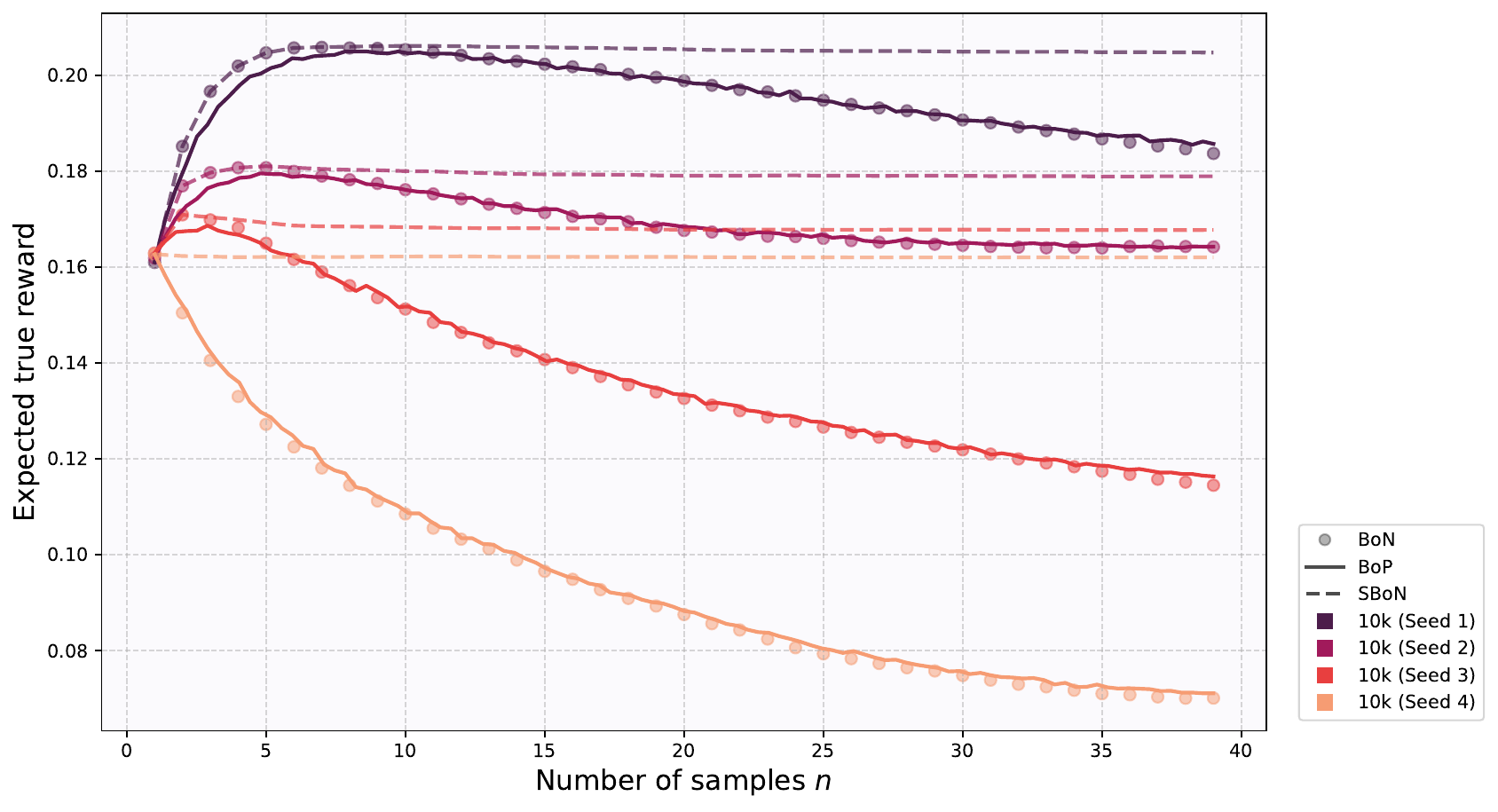}
    
    \textbf{(b)} 25\% label noise
  \end{minipage}
  \caption{Expected true‐reward vs.\ average number of samples with proxy trained on 10\,000 examples: (a) without label noise; (b) with 25\% label noise.}
  \label{fig:10k_combined}
\end{figure}

\begin{figure}[H]
  \centering
  \begin{minipage}[t]{0.48\linewidth}
    \centering
    \includegraphics[width=\linewidth]{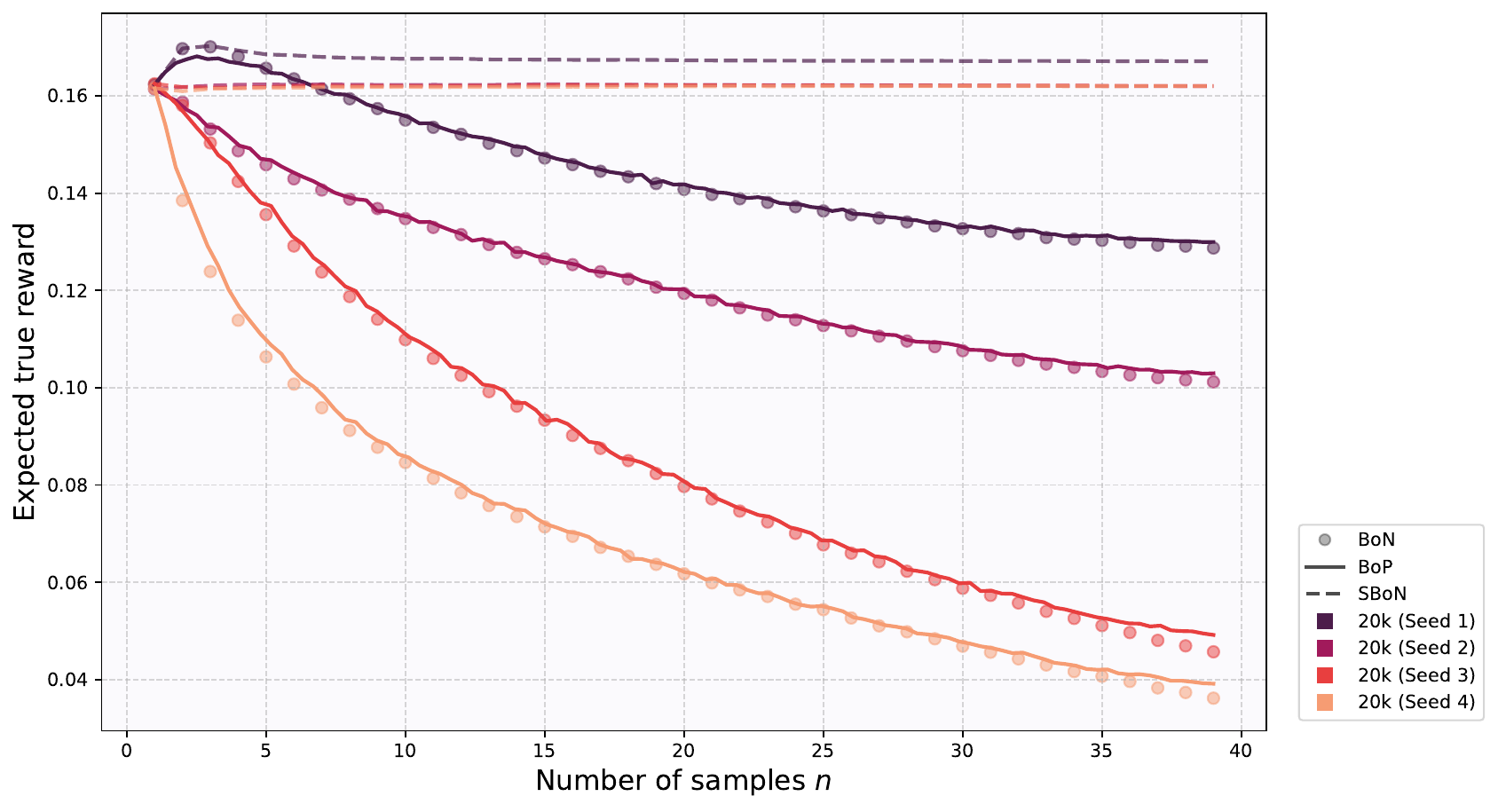}
    
    \textbf{(a)} No label noise
  \end{minipage}%
  \hfill
  \begin{minipage}[t]{0.48\linewidth}
    \centering
    \includegraphics[width=\linewidth]{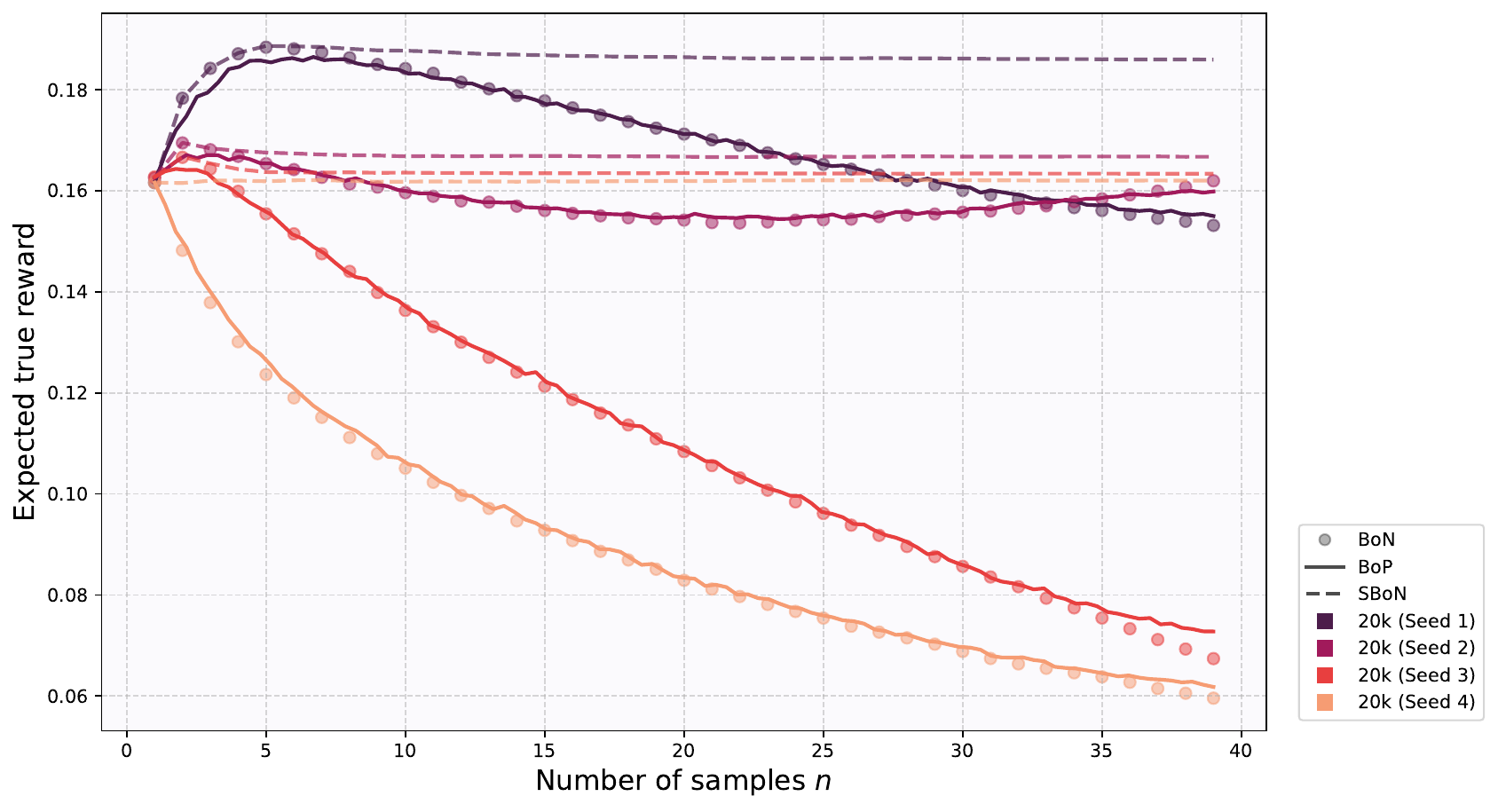}
    
    \textbf{(b)} 25\% label noise
  \end{minipage}
  \caption{Expected true‐reward vs.\ average number of samples with proxy trained on 20\,000 examples: (a) without label noise; (b) with 25\% label noise.}
  \label{fig:20k_combined}
\end{figure}

\begin{figure}[H]
  \centering
  \begin{minipage}[t]{0.48\linewidth}
    \centering
    \includegraphics[width=\linewidth]{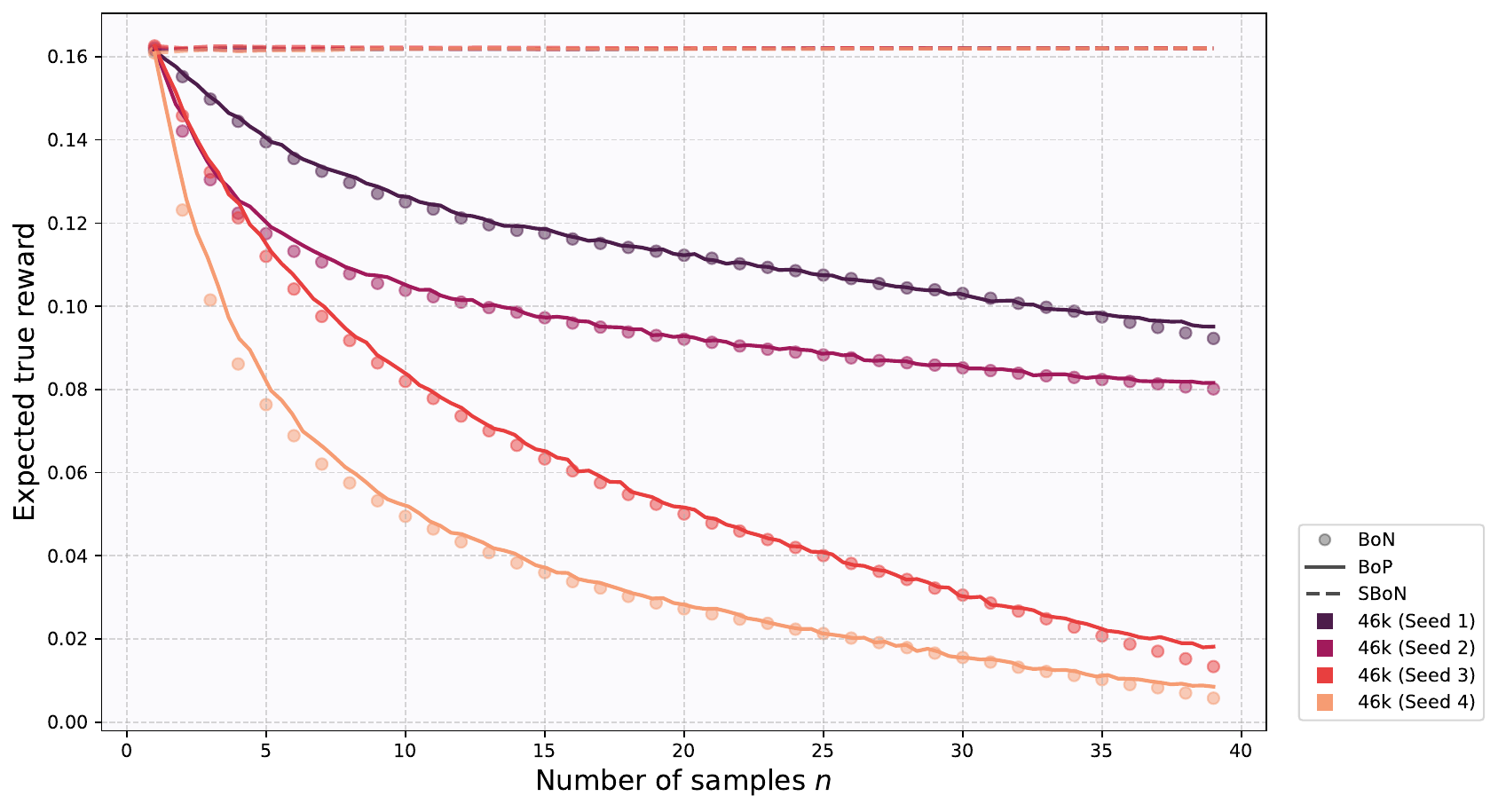}
    
    \textbf{(a)} No label noise
  \end{minipage}%
  \hfill
  \begin{minipage}[t]{0.48\linewidth}
    \centering
    \includegraphics[width=\linewidth]{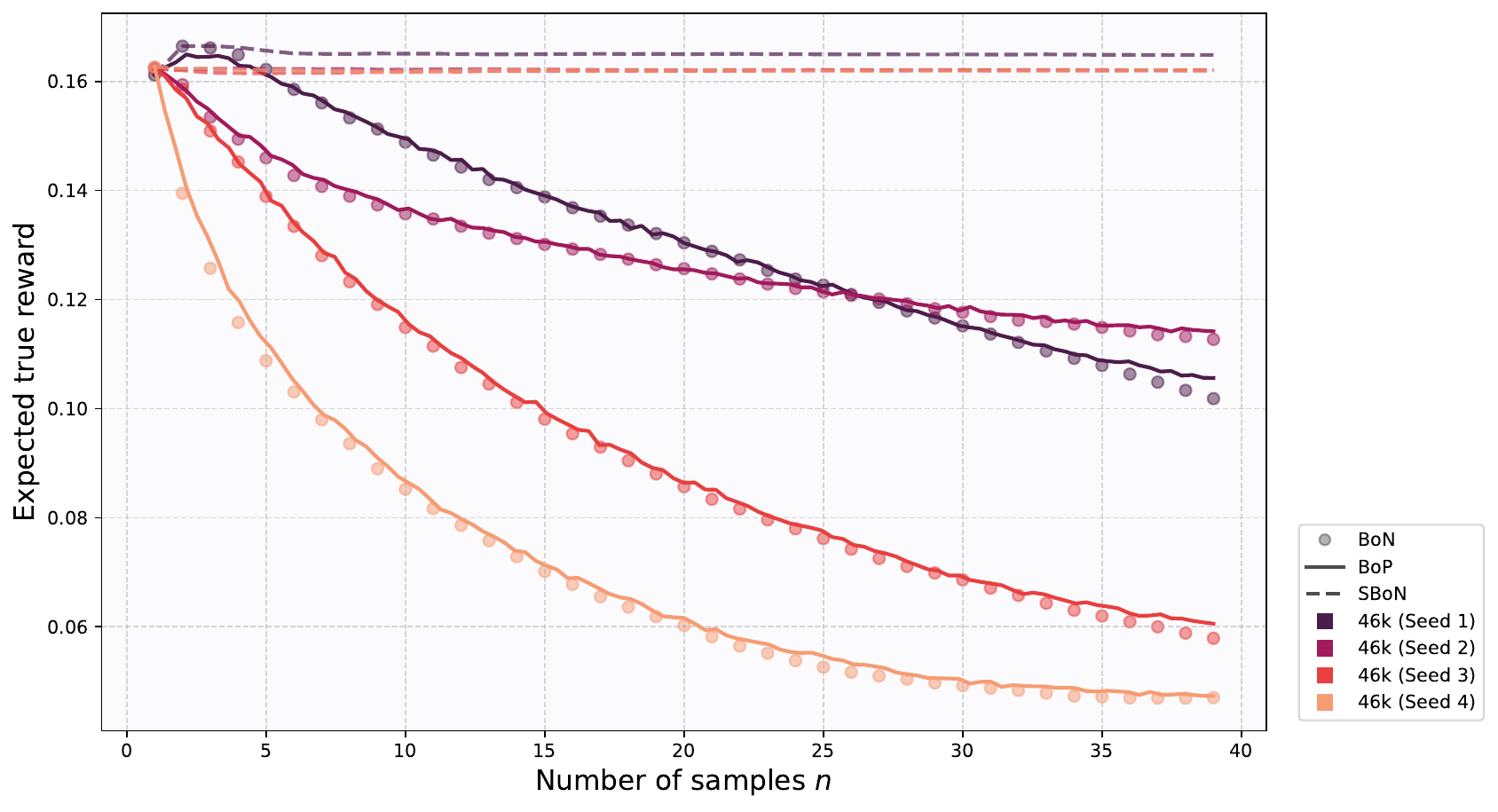}
    
    \textbf{(b)} 25\% label noise
  \end{minipage}
  \caption{Expected true‐reward vs.\ average number of samples with proxy trained on 46\,000 examples: (a) without label noise; (b) with 25\% label noise.}
  \label{fig:46k_combined}
\end{figure}

\begin{figure}[H]
  \centering
  \begin{minipage}[t]{0.48\linewidth}
    \centering
    \includegraphics[width=\linewidth]{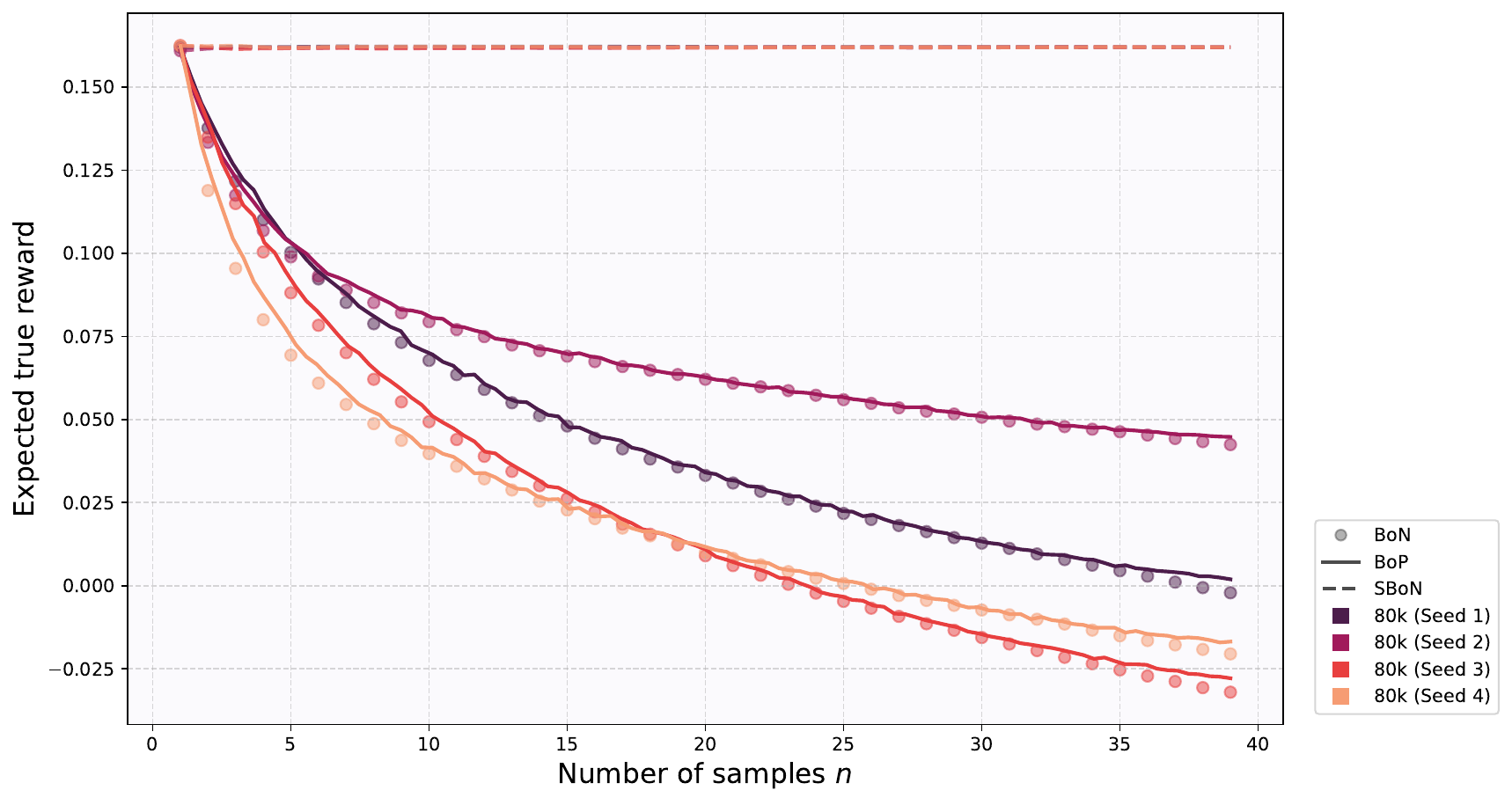}
    \textbf{(a)} No label noise
  \end{minipage}%
  \hfill
  \begin{minipage}[t]{0.48\linewidth}
    \centering
    \includegraphics[width=\linewidth]{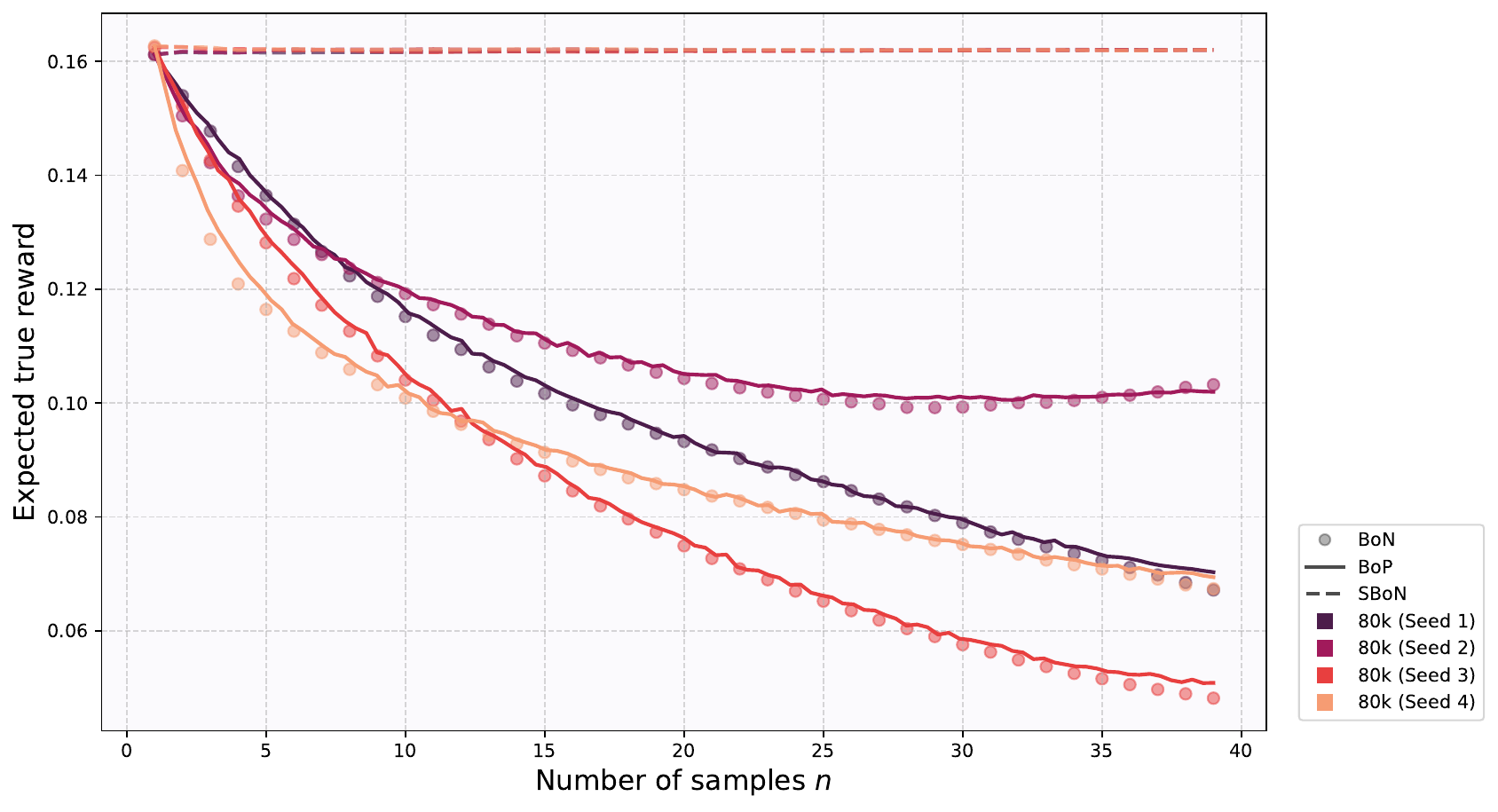}
    \textbf{(b)} 25\% label noise
  \end{minipage}
  \caption{Expected true‐reward vs.\ average number of samples with proxy trained on 80\,000 examples: (a) without label noise; (b) with 25\% label noise.}
  \label{fig:80k_combined}
\end{figure}

\end{document}